\newsavebox{\algbox}
\newcommand{\hlt}{\textcolor{black}}
\newcommand{\PM}{P_{M}}
\newcommand{\PMw}{P_{M,w}}
\newcommand{\V}{\mathcal{V}}
\newcommand{\Vz}{\mathcal{V}_0}
\newcommand{\Vo}{\mathcal{V}_1}
\theoremstyle{plain}
\newtheorem{theorem}{Theorem}[section]
\newtheorem{lemma}[theorem]{Lemma}
\newtheorem{corollary}[theorem]{Corollary}
\theoremstyle{definition}
\newtheorem{definition}[theorem]{Definition}
\theoremstyle{remark}
\icmltitlerunning{BiMark: Unbiased Multilayer Watermarking for Large Language Models
}
\newcommand*\emptycirc[1][1ex]{%
    \tikz\draw[thick] (0,0) circle (#1);%
}
\newcommand*\halfcirc[1][1ex]{%
    \begin{tikzpicture}
        \draw[fill] (0,0) -- (90:#1) arc (90:270:#1) -- cycle;
        \draw[thick] (0,0) circle (#1);
    \end{tikzpicture}%
}
\newcommand*\fullcirc[1][1ex]{%
    \tikz\fill (0,0) circle (#1);%
}
\newcommand{\ec}{\emptycirc[0.8ex]}
\newcommand{\hc}{\halfcirc[0.8ex]}
\newcommand{\fc}{\fullcirc[0.9ex]}
\begin{document}

\twocolumn[
\icmltitle{BiMark: Unbiased Multilayer Watermarking for Large Language Models}



\icmlsetsymbol{equal}{*}

\begin{icmlauthorlist}
\icmlauthor{Xiaoyan Feng}{equal,griffith}
\icmlauthor{He Zhang}{equal,rmit}
\icmlauthor{Yanjun Zhang}{uts}
\icmlauthor{Leo Yu Zhang}{griffith}
\icmlauthor{Shirui Pan}{griffith}
\end{icmlauthorlist}

\icmlaffiliation{griffith}{Griffith University, Brisbane}
\icmlaffiliation{rmit}{RMIT University, Melbourne}
\icmlaffiliation{uts}{University of Technology Sydney, Sydney}
\icmlcorrespondingauthor{Leo Yu Zhang}{leo.zhang@griffith.edu.au}
\icmlcorrespondingauthor{Shirui Pan}{s.pan@griffith.edu.au}

\icmlkeywords{Watermarking, Large Language Models, Text Generation, AI Security, Machine Learning}

\vskip 0.3in
]



\printAffiliationsAndNotice{\icmlEqualContribution} 

\begin{abstract}
Recent advances in Large Language Models (LLMs) have raised urgent concerns about LLM-generated text authenticity, prompting regulatory demands for reliable identification mechanisms. 
Although watermarking offers a promising solution, existing approaches struggle to simultaneously achieve three critical requirements: text quality preservation, model-agnostic detection, and message embedding capacity, which are crucial for practical implementation.
To achieve these goals, the key challenge lies in balancing the trade-off between text quality preservation and message embedding capacity.
To address this challenge, we propose BiMark, a novel watermarking framework that achieves these requirements through three key innovations:
(1) a bit-flip unbiased reweighting mechanism enabling model-agnostic detection, (2) a multilayer architecture enhancing detectability without compromising generation quality, and (3) an information encoding approach supporting multi-bit watermarking. 
Through theoretical analysis and extensive experiments, we validate that, 
compared to state-of-the-art \hlt{multi-bit watermarking methods}, BiMark achieves up to 30\% higher extraction rates for short texts \hlt{while maintaining text quality indicated by lower perplexity}, and performs comparably to non-watermarked text on downstream tasks such as summarization and translation.
\end{abstract}

\section{Introduction}
Large Language Models (LLMs)~\cite{openai2022chatgpt,qwen2.5} have recently emerged in advancing cutting-edge technologies, and their rapid evolution has made LLM-generated content increasingly indistinguishable from human-created text. 
However, pressing security challenges \cite{zhang2023masked,wei2025extracting,gong2025not,luo2024distributed,zhang2025dynamic,zhang2025exploring,zhang2023demystifying,zhang2024unraveling,pan2025label} have been raised with the rapid progression of AI technologies \cite{zheng2024large,zhang2024trustworthy,zhengonline,zheng2023gnnevaluator, zheng2025large,linkprediction}, such as the misuse of LLMs \cite{tang2024science} and synthetic data pollution~\cite{collapse}.  
Therefore, recent legislation, such as the AI Act of the European Union~\cite{AIAct2023} and the executive order of the U.S. Department of Commerce~\cite{US}, mandates the implementation of technical measures to mark and detect LLM-generated content. 
Thus, it appeals to LLM watermarking~\cite{zhang2024large,tang2024science}, a promising method to address these challenges, offering a proactive approach to embedding identifiable patterns in LLM-generated text~\cite{watermark_survey}.

\begin{table*}[t]
\caption{Comparison between BiMark (our method) and related studies.}
\renewcommand\arraystretch{1.1}
\centering
\resizebox{2.00\columnwidth}{!}{
\begin{threeparttable}
\begin{tabular}{cc|l|ccc}
\hline
\multicolumn{2}{c|}{\textbf{Techniques}} & \multicolumn{1}{c|}{\multirow{2}{*}{\textbf{Methods}}} & \multicolumn{3}{c}{\textbf{Requirements}} \\ \cline{1-2} \cline{4-6} 
\multicolumn{1}{c|}{\textbf{Timing}*} & \multicolumn{1}{c|}{\begin{tabular}[c]{@{}c@{}}Vocabulary\\ Partition\end{tabular}} & \multicolumn{1}{c|}{} & \multicolumn{1}{c|}{\textbf{\begin{tabular}[c]{@{}c@{}}Text Quality \\ Preservation\end{tabular}}} & \multicolumn{1}{c|}{\textbf{\begin{tabular}[c]{@{}c@{}}Model-Agnostic \\ Detection\end{tabular}}} & \multicolumn{1}{c}{\textbf{\begin{tabular}[c]{@{}c@{}}Message Embedding \\ Capacity\end{tabular}}} \\ \hline
\multicolumn{1}{c|}{\multirow{3}{*}{Post-Hoc}} & \multirow{3}{*}{/} & \citet{yang2022tracing}, REMARK-LLM~\cite{zhang2024remark} & \fc & \ec & \hc \\
\multicolumn{1}{c|}{} &  & AWT~\cite{AWT},~\citet{yoo2023robust} & \fc & \ec & \fc \\
\multicolumn{1}{c|}{} &  & \citet{he2022protecting}, CATER~\cite{he2022cater} & \fc & \fc & \ec \\ \hline
\multicolumn{1}{c|}{\multirow{10}{*}{\begin{tabular}[c]{@{}c@{}}Inference-\\ Time\end{tabular}}} & \multirow{3}{*}{/} & \citet{Aaronson2023} & \hc & \fc & \ec \\
\multicolumn{1}{c|}{} &  & \citet{kuditipudi2023robust},~\citet{zhao2024permute} & \fc & \fc & \ec \\
\multicolumn{1}{c|}{} &  & \citet{fernandez2023three} & \hc & \fc & \hc \\ \cline{2-6} 
\multicolumn{1}{c|}{} & \multirow{2}{*}{\begin{tabular}[c]{@{}c@{}}Cumulative\\ Probability\end{tabular}} & $\gamma$-reweight~\cite{hu2023unbiased} & \fc & \ec & \ec \\
\multicolumn{1}{c|}{} &  & DiPmark~\cite{wu2023dipmark} & \fc & \fc & \ec \\ \cline{2-6} 
\multicolumn{1}{c|}{} & \multirow{5}{*}{\begin{tabular}[c]{@{}c@{}}Token \\ Counting\end{tabular}} & GINSEW~\cite{zhao2023protecting} & \ec & \hc & \ec \\
\multicolumn{1}{c|}{} &  & Soft Red List~\cite{kirchenbauer2023watermark} & \ec & \fc & \ec \\
\multicolumn{1}{c|}{} &  & SynthID~\cite{synthid} & \fc & \fc & \ec \\
\multicolumn{1}{c|}{} &  & MPAC~\cite{MPAC},~\citet{ecc} & \ec & \fc & \fc \\
\multicolumn{1}{c|}{} &  & \textbf{BiMark (Ours)} & \fc & \fc & \fc \\ \hline
\end{tabular}
\begin{tablenotes}
    \item[*] 
    Inference-time watermarking is achieved by modifying the sampling strategy of the model, such as adjusting the probability distribution of tokens.
    Post-hoc watermarking is achieved by editing existing text, such as synonym replacement and paraphrasing.
    In this table, \ec~indicates ``Not Considered'', 
    \hc~indicates ``Partially Considered", 
    and \fc~indicates ``Considered''. 
    For multi-bit capacity, \hc~indicates methods require the prior knowledge of embedded messages, and \fc~indicates methods can work without such knowledge.
\end{tablenotes}
\end{threeparttable}
}
\vskip -0.05in
\label{tab:related works}
\end{table*}

Depending on whether it is integrated with the language model, watermarking can be categorized into inference-time watermarking and post-hoc watermarking~\cite{tang2024science}. 
Compared to post-hoc methods that require an additional auxiliary module~\cite{AWT,yang2022tracing,yoo2023robust}, inference-time watermarking~\cite{synthid,Aaronson2023,kirchenbauer2023watermark} can embed detectable patterns linked to a secret key by modifying the LLM token sampling process in a plug-and-play manner, thereby efficiently achieving a cryptographically guaranteed detectable secret watermark~\cite{Aaronson2023, kirchenbauer2023watermark}.

In general, implementing language model watermarking necessitates three key requirements:
1) \textit{Text quality preservation}: The integration of watermarking mechanisms must maintain the fundamental utility and performance of language models~\cite{hu2023unbiased,kuditipudi2023robust}.
2) \textit{Model-agnostic detection}: The watermark verification process should function independently of the generating model's architecture, parameters, or access rights~\cite{kirchenbauer2023watermark,kuditipudi2023robust}.
3) \textit{Message embedding capacity}: The watermarking scheme should accommodate sufficient capacity to encode crucial metadata such as model identity, generation timestamp, and content provenance~\cite{bob2024digital, pang2024no}.

Note that it is not trivial to satisfy the above three requirements, as there is a trade-off between text quality preservation and message embedding capacity~\cite{MPAC,ecc,kirchenbauer2023watermark}.
\hlt{
Some existing works~\cite{MPAC,ecc} achieve efficient multi-bit watermarking by embedding one message bit per token through hashing. 
However, they rely on text quality-compromising zero-bit watermarking~\cite{kirchenbauer2023watermark} as a foundation, leading their enhancement of message embedding capacity inevitably come at the cost of text quality preservation.
Note that, although other methods \cite{fernandez2023three,fairoze2023publicly,wang2023towards} can mitigate this issue to some extent by associating messages with secret keys or hash functions, they lack detection efficiency due to using messages non-invertibly.
}

To this end, we propose an integrated pipeline called BiMark to embed and extract multi-bit messages via watermarking while preserving text quality. BiMark requires neither training for embedding nor access to language models for detection.
This approach simultaneously achieves the three key requirements.
Our three-fold contributions are as follows:
\begin{enumerate}
    \item For filling the gap between text quality preservation \hlt{through unbiased reweighting} and model-agnostic detection, we propose a \hlt{token counting} based reweighting approach that enables model-agnostic detection.
    \item For handling the trade-off between \hlt{message} embedding capacity and text quality preservation, we present a multilayer reweighting mechanism that enhances detectability without sacrificing text quality.
    \item For implementing multi-bit watermarking, we introduce an \hlt{XOR-enhanced} information encoding approach that enables messages carrying \hlt{and message-agnostic extraction} while preserving text quality.
\end{enumerate}
\section{Related Work}
\noindent
LLM watermarking aims to address the urgent need to differentiate model-generated text from human-written content.
Text watermarking~\cite{fang2017generating, ziegler2019neural, he2022cater, fairoze2023publicly, fridrich2004searching, christ2024undetectable, zhao2024permute} has evolved significantly since its early application in content protection. 
Here we review related work on inference-time watermarking around the three key capacities in LLM watermarking.

\noindent
\textbf{Text quality preservation}. 
Early watermarking methods such as Soft Red List~\cite{kirchenbauer2023watermark} significantly influenced the field but faced challenges with text quality preservation due to their biased nature. 
Recent work has addressed this challenge through two main approaches: unbiased reweighting~\cite{hu2023unbiased,wu2023dipmark} and distortion-free sampling~\cite{kuditipudi2023robust,zhao2024permute,synthid}. 
Unbiased reweighting methods, such as $\gamma$-reweight~\cite{hu2023unbiased} and DiPmark~\cite{wu2023dipmark}, modify the probability distributions of generated text to inject watermarks while ensuring that the expected probability distribution remains unchanged.
Distortion-free sampling methods, such as Gumbel sampling~\cite{Aaronson2023, kuditipudi2023robust} and SynthID~\cite{synthid}, employ secret keys to guide the sampling process rather than modifying the probability distribution.
Many of them enable model-agnostic detection, but lack multi-bit functionality.

\noindent
\textbf{Model-agnostic detection}.
The ability to detect watermarks without accessing the original or an auxiliary model is crucial for practical deployment.
Existing watermarking methods developed from unbiased reweighting~\cite{hu2023unbiased,wu2023dipmark} utilize cumulative probability to partition the vocabulary, which makes them unsuitable for model-agnostic detection.
\hlt{In the scenario of multi-bit watermarking, a similar concept critical for practical deployment is \textit{message-agnostic detection}.  
That is, the ability to extract messages without access to the message space.
Among existing multi-bit watermarking methods, only the works of~\citet{MPAC} and~\cite{ecc} satisfy this property through extracting messages from a voting matrix bit by bit. 
However, they are developed based on Soft Red List and thus without guaranteeing text quality.}

\noindent
\textbf{Message embedding capacity}.
\hlt{Message embedding capacity enables information to be embedded in and extracted from watermarked text, which is essential for tracing text provenance~\cite{Cohen25,pang2024no}.
Recent studies develop multi-bit watermarking for message embedding capacity by associating each message with a unique secret key~\cite{fernandez2023three}, or incorporating messages into hash functions as inputs~\cite{fairoze2023publicly,wang2023towards}.
These methods require access to message space for extraction, as the messages themselves are entangled with hash functions in a non-invertible manner.}

\citet{MPAC} proposed a position allocation technique incorporating Soft Red List~\cite{kirchenbauer2023watermark}, which can solve this problem.
They allocate each token to a subunit of a message for watermark embedding, and extract the message bit by bit, which is in a message-agnostic way.
~\citet{ecc} further incorporates error correction codes to enhance multi-bit watermarking resilience.
However, these methods were designed using Soft-List, leading to a lack of unbiasedness.
As shown in the Tab. \ref{tab:related works}, a comprehensive comparison between BiMark and related studies is presented, highlighting our contributions in this field.

\section{Preliminary}
\subsection{Watermarking for LLMs}

\textbf{Text generation of LLMs}.
A Large Language Model (LLM) produces text sequentially. 
Let $\V$ denote a vocabulary set of all tokens.
\hlt{A LLM generates a single token $x\in\V$ from a probability distribution $\PM\in\Delta_\V$ over all possible next tokens  conditioned on preceding tokens, and continue this process autoregressively.}
\hlt{Denoting a sequence of tokens $(x_1,x_2,\cdots,x_{n})$ as $\bm{x}_{1:n}$, the joint generation probability of a token sequence can be written as: $\PM(\bm{x}_{n+1:n+m}| \bm{x}_{1:n})=\prod_{i=1}^{m}\PM(x_{n+i}|\bm{x}_{1:n+i-1})$.}

\hlt{\textbf{Watermarking LLM-generated text}.}
\hlt{To reliably distinguish between LLM-generated and human-written text, watermarking methods actively inject detectable signals using secret keys~\cite{kirchenbauer2023watermark, Aaronson2023} into LLM-generated text.
During detection, through measuring the similarity between extracted signals and injected signals with the same keys, the watermark can be verified.
Early works~\cite{kirchenbauer2023watermark,kuditipudi2023robust,hu2023unbiased} only focus on addressing the binary question of whether a text contains watermarks, referred to \textit{zero-bit watermarking}.
For more reliable text authenticity, watermarking methods that can carry informative messages are proposed, referred to as \textit{multi-bit watermarking} ~\cite{MPAC,fairoze2023publicly,Cohen25}.}

A \textit{zero-bit watermarking scheme} has two key components: 1) a distribution reweighting function 
$R_k:\Delta_\V\rightarrow\Delta_\V$ through which the key $k$ modifies the original probability distribution to inject signals, where $\Delta_\V$ is the set of all possible probability distributions over the vocabulary set $\V$; and 2) a detector: ${D:\V^*\rightarrow\{\text{True}, \text{False}\}}$ which determines whether a token sequence contains the watermarks, where $\V^*$ denotes all possible token sequences over vocabulary $\V$.

A \textit{multi-bit watermarking scheme} has two components: 
1) a distribution reweighting function $R_{k,\bm{m}}:\Delta_\V\rightarrow\Delta_\V$ that works similarly to the zero-bit scheme but additionally incorporates a message $m\in\{0,1\}^\ell$ for watermarking, where $\ell$ is the message length.
2) a detector: ${D:\V^*\rightarrow\{0,1\}^\ell}$, which is also similar to a zero-bit watermark detector but can extract embedded messages from watermarked text.

\subsection{Soft Red List Watermarking}
Soft Red List~\cite{kirchenbauer2023watermark} is a pioneering inference-time watermarking method, introducing the ingenious concept of \textit{green list} and \textit{red list} for LLM watermarking.
We examine this method because its \textit{green list} and \textit{red list} approach inspired our bit-flip unbiased reweighting technique, which determines green lists through a coin flip. 

\hlt{\textbf{Green-Red partition for watermarking}.}
Soft Red List embedding watermarks by employing a uniquely devised vocabulary bipartition framework guided by a pseudorandom function $\mathrm{prf}_k$.
This function, initialized with a secret key $k$, takes a sliding window of previous tokens as input and generates a binary mask for vocabulary partitioning. 
The context-localized and key-dependent nature of $\mathrm{prf}_k$ ensures both efficiency and tamper resistance of the watermark.
Given a proportion $\gamma$, the method splits vocabulary $\V$ into a green list and a red list using $\mathrm{prf_k}$ such that the size of green lists is $\gamma|\V|$.
When generating text, the method constructs a probability distribution $\PMw$ to facilitate watermarking by adjusting $\PM$ --- enhancing probabilities for tokens on green lists while decreasing those on red lists. 

\begin{figure*}[h]
    \centering
    \includegraphics[width=0.99\linewidth]{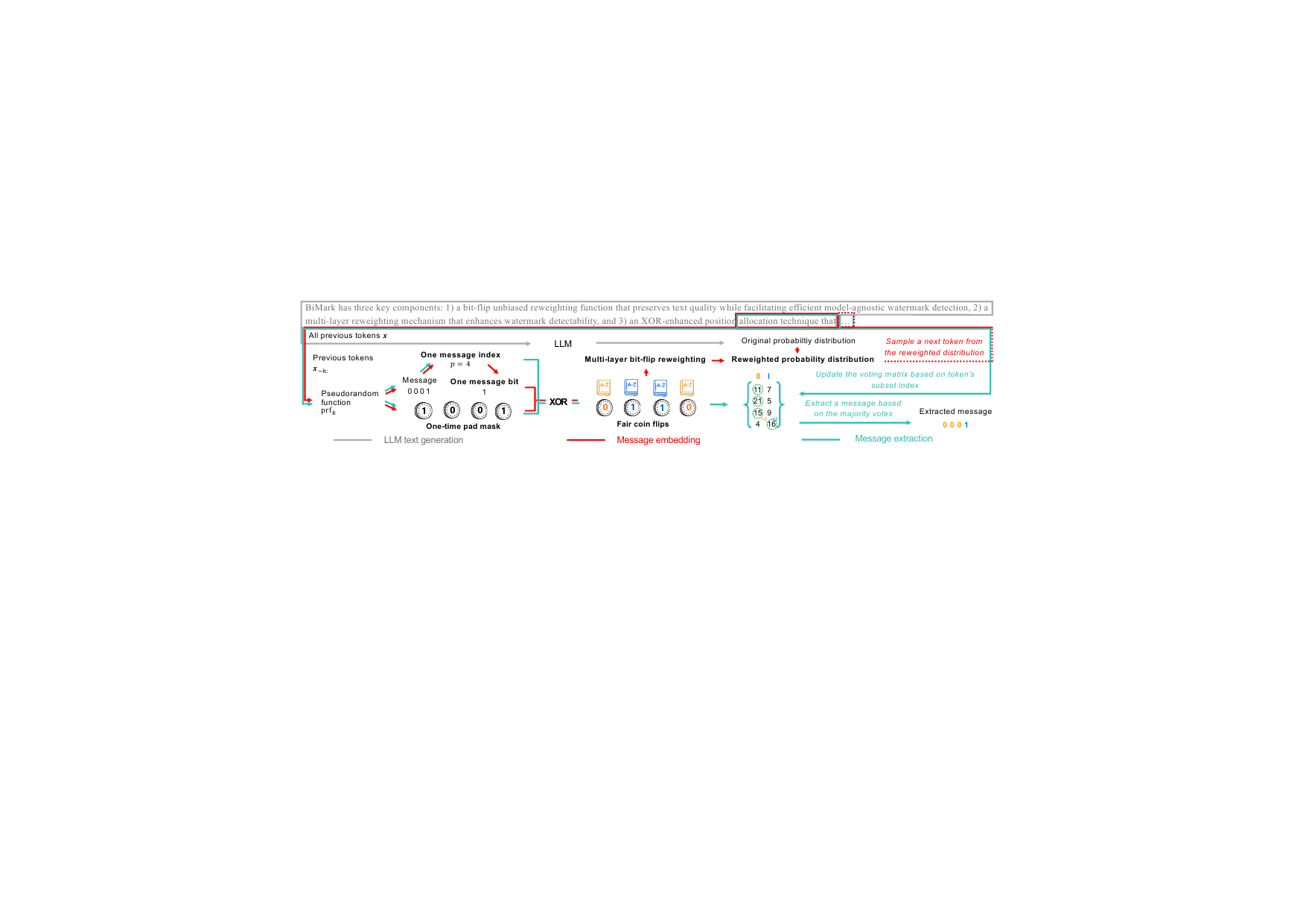}
    \vspace{-2pt}
    \caption{Pipeline of BiMark. 
    \hlt{a) The LLm outputs a probability distribution over all tokens (grey lines). 
    b) The message embedding process (red lines) modifies this distribution using a pseudorandom function that selects message bits and generates one-time pad masks. 
    After XOR operations create fair coin flips, multilayer unbiased reweighting guides token sampling.
    c) Message extraction (green lines) reconstructed the message by observing token subset memberships across multilayers and using majority voting to recover message bits.
    }
    }
    \label{fig:pipeline}
     \vskip -0.05in
\end{figure*}

\noindent
\hlt{\textbf{Z-test for watermark detection}.}
\label{sec:z-test}
Soft Red List watermarks can be detected by comparing the proportion of green tokens in the text with the expected proportion $\gamma$, because the watermarked LLM is influenced to generate more tokens from green lists created by a secret key.
Specifically, given a text segment with $T$ tokens, let $G$ denote the count of tokens on green lists.
Since green list membership is randomly determined for each token, a non-watermarked text will have a green token proportion close to $\gamma$ following the law of large numbers. 
To statistically test for deviations, \textit{a one-proportion z-test} can be applied: 
\begin{equation*}
    z=\frac{G/T-\gamma}{\sqrt{\gamma(1-\gamma)/T}}. 
\end{equation*}
If the z-score surpasses a predefined threshold, it suggests that the text is watermarked, since the proportion of green tokens deviates significantly from the expected value $\gamma$.
\hlt{When using the z-score for watermark detection, the corresponding p-value reflects the false positive rate (Type-I error), whereas the false negative rate (Type-II error) is influenced by the inherent entropy characteristics of the LLM~\cite{kirchenbauer2023watermark, kuditipudi2023robust}.}

\subsection{Watermarking via Unbiased Reweighting}
While Soft Red List effectively embeds detectable watermarks, it potentially degrades text quality by introducing biases in the generation process.
To address this limitation, \citet{hu2023unbiased} proposed unbiased watermarking, which aims to maintain the original distribution of generated text while enabling watermark detectability.
\hlt{Assume that a service provider creates watermarks using a secret key $k$ randomly chosen from a key space $\mathcal{K}$ following a prior distribution $P_k(k)$. A desirable watermarking property is that the probability distributions of watermarked text and non-watermarked text are identical. 
This can be formally defined as follows:}
\begin{definition}[$n$-shot undetectable~\cite{hu2023unbiased}]
For a fixed input sequence $\bm{a}\in\V^*$, \hlt{we say that  watermarked LLM and key prior pair $(\PMw, P_k)$ is $n$-shot-undetectable compared to original LLM $\PM$ if for any $n$ number of strings $\bm{x}^i\in\V^*$:}
\begin{equation*}
\hlt{
\prod_{i=1}^{n}\PM(\bm{x}^i|\bm{a})=\sum_{k\in\mathcal{K}}P_k(k)\prod_{i=1}^{n}\PMw(\bm{x}^i|\bm{a};k).
}
\end{equation*}
\end{definition}

To achieve $1$-shot undetectability, \citet{hu2023unbiased} proposed \textit{unbiased reweighting function} for a single token generation: 
\begin{definition}
\label{Def:UnbiasedReweighting}
(Unbiased reweighting function~\cite{hu2023unbiased}). Given a random variable $e$ and a reweighting function $R_e:\Delta_{\V}\rightarrow\Delta_{\V}$, we say that $R_e$ is an \textit{unbiased} reweighting function if and only if for all probability distributions $P\in\Delta_{\V}, ~E_e[R_{e}(P)]=P$.
\end{definition}

To achieve $n$-shot undetectability, they introduced a context tracking mechanism that only generates watermarked tokens when encountering new context tokens and generates non-watermarked tokens otherwise. 
This mechanism prevents the reuse of previously consumed context tokens as seeds for watermarking, thereby ensuring the independence of random variables used in the reweighting function.

\subsection{Multi-bit Watermarking via Position Allocation}
Many prior works~\cite{fernandez2023three,fairoze2023publicly,wang2023towards,MPAC,ecc} have explored multi-bit watermarking that extends beyond zero-bit detection.
Among them, \citet{MPAC} proposed Multi-bit Watermarking via Position Allocation (MPAC), which enables message embedding without additional latency compared to Soft Red List and allows message extraction in a message-agnostic manner. 
We examine this technique, as it inspired BiMark, which integrates it with XOR operations to achieve unbiased watermarking. 

\textbf{Message embedding}. 
MPAC inherits the framework of Soft Red List and encodes messages into watermarks by selecting different vocabulary subsets as green lists.
Assuming the message is a binary string $\bm{m}\in\{0,1\}^\ell$ , for each token generation: A pseudorandom function $\mathrm{prf_k}$ takes previous tokens in a sliding window to 1) partition the vocabulary $\V$ into two balanced subsets \footnote{\hlt{In the original paper, the number of partitions can be over 2. For ease of discussion, we take bipartition here.}} $\Vz$ and $\Vo$, 2) select one position $p\in\{1,2,\cdots, \ell\}$ which determines which message bit will be encoded. 
The method then designates the vocabulary subset $\V_{\bm{m}[p]}$ as the green list, and the subsequent process of increasing probabilities following as Soft Red List.

\textbf{Message extraction}.
\label{sec:multibit_extraction}
MPAC extracts embedded messages from text using a green-list voting matrix $\boldsymbol{M}\in\mathbb{R}^{\ell\times 2}$ in a message-agnostic manner.
Given a text, for each token $x$, the method pseudorandomly reconstructs the corresponding vocabulary partition $\Vo$ and $\Vz$, along with the message position $p$. 
Next, it checks the subset membership of token $x$: if $x\in\Vz$, then $\boldsymbol{M}[p][0]$ is incremented by $1$; if $x\in\Vo$,  then $\boldsymbol{M}[p][1]$ is incremented by $1$.
After processing all tokens, based on the assumption that watermarked text contains more green tokens than red tokens, each bit of the extracted message can be obtained as $\arg\max(\boldsymbol{M}[p][:])$ for $p\in\{1,\cdots,\ell\}$. 
The sum of the majority votes $\sum_{p=1}^{\ell}\max(\boldsymbol{M}[p][:])$ serves as the total green tokens for determining whether the text contains watermarks.

\begin{figure}[t]
\centering
\includegraphics[width=0.99\linewidth]{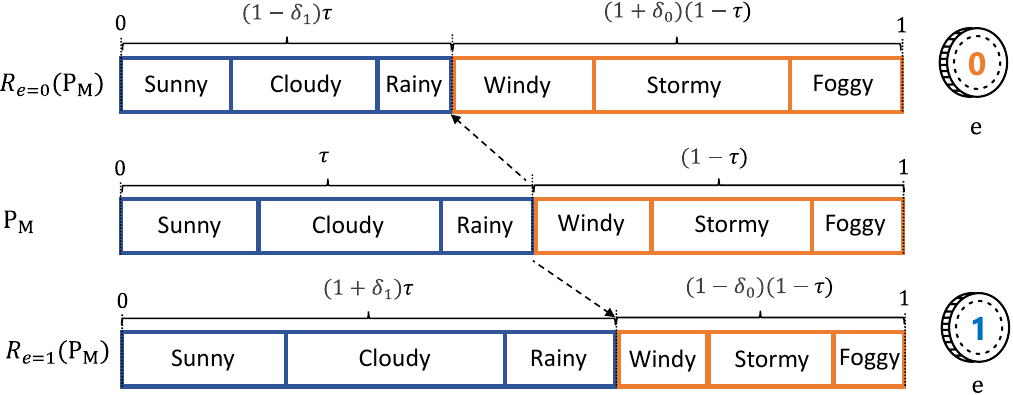}
\vspace{-2pt}
\caption{Bit-flip unbiased reweighting. 
\hlt{Given a vocabulary bipartition $\Vz$ (shown in yellow) and $\Vo$ (shown in blue),  a probability distribution $\PM$ over the vocabulary $\V$ is adjusted based on a fair coin flip $e$. 
When $e=0$, probabilities of $\Vz$ is increased by $\delta_0\%$ and probabilities of $\Vo$ is decreased by $\delta_1\%$.
When $e=1$, probabilities of $\Vo$ is increased by $\delta_1\%$ and probabilities of $\Vz$ is decreased by $\delta_0\%$.
This symmetric adjustment preserves original probabilities in expectation when the coin flip are fair.}}
\label{Fig:BitReweighting}
 \vskip -0.1in
\end{figure}
\section{The Proposed BiMark}
BiMark is an unbiased multi-bit watermarking framework designed to achieve text quality preservation, message embedding capacity, and model-agnostic and message-agnostic detection simultaneously.
It operates in two phases: watermark embedding and watermark detection.
In the embedding phase, we introduce a novel bit-flip reweighting function that preserves text quality while facilitating model-agnostic detection. 
To enhance watermark detectability, a multilayer reweighting mechanism is employed.
For carrying messages, we propose multi-bit watermarking via XOR-enhanced position allocation based on a one-time pad mechanism. 
In the detection phase, a voting matrix is constructed based on the text for message extraction. 
Fig.~\ref{fig:pipeline} illustrates the overall architecture of BiMark.

\subsection{Bit-Flip Unbiased Reweighting}
\label{sec:bit-flip reweighting}
An unbiased reweighting function is the core component of unbiased watermarking.
While unbiased reweighting helps preserve text quality in watermarking, existing methods ~\cite{hu2023unbiased, wu2023dipmark} based on cumulative probability distributions require access to model probabilities, making them unsuitable for model-agnostic detection.  

We propose a bit-flip reweighting function that relies on token counting-based bipartitions, enabling both unbiased reweighting and model-agnostic detection.
The core mechanism of which is straightforward: we use a \textit{fair} coin flip to determine the direction of probability redistribution between two vocabulary bipartitions $\Vz$ and $\Vo$. 
When the coin shows heads, we increase the probabilities of tokens in $\Vo$ while proportionally decreasing the probabilities in $\Vz$, and vice versa for tails. 
This symmetric adjustment and the fairness of the coin flip ensure the expected token distribution remains unchanged while creating a detectable pattern in text generated by watermarked LLMs.

Formally, let a random variable ${e}$ represent a fair coin flip, where ${e}=1$ indicates heads and ${e}=0$ indicates tails. 
Given a vocabulary bipartition $(\Vz, \Vo)$, the bit-flip reweighting function Fig.~\ref{Fig:BitReweighting} adjusts an original probability as follows: 
\begin{equation}
\begin{aligned}
&R_{\theta,e}(\PM)(x) & =
\begin{cases}
(1+\delta_1)\PM(x) & \text{if } e=1 \wedge x \in \mathcal{V}_1, \\
(1-\delta_0)\PM(x) & \text{if } e=1 \wedge x \in \mathcal{V}_0, \\
(1-\delta_1)\PM(x) & \text{if } e=0 \wedge x \in \mathcal{V}_1, \\
(1+\delta_0)\PM(x) & \text{if } e=0 \wedge x \in \mathcal{V}_0.
\end{cases}
\end{aligned}
\label{eq:bit-flip reweighting}
\end{equation}
Here, $\delta_0$ and $\delta_1$ are scaling factors for $\Vz$ and $\Vo$, respectively, and $\theta=(\delta_0, \delta_1, \Vz, \Vo)$ denotes reweighting function configuration. 
Fig.~\ref{Fig:BitReweighting} illustrates an example of the bit-flip unbiased reweighting.
The scaling factors $\delta_0$ and $\delta_1$ must maintain a valid probability distribution.
\hlt{Subject to the constraint that the probability distribution sums to unity, a valid $\delta_0$ can be derived by a valid $\delta_1$ as Lemma~\ref{lemma:scaling}.}

\begin{lemma}[Scaling factor constraint]
Let {$\tau = \sum_{x \in \Vo} \PM(x)$} be the total probability of partition $\Vo$ in the distribution $\PM$. For $\tau<1$, the scaling factor of $\Vz$ must satisfy:
\begin{equation}
\delta_0 = \delta_1 \cdot \tau /\left(1- \tau \right).
\label{eq:ScaleFactor}
\end{equation}
\begin{proof}
Let $\Delta$ be the absolute change in probability between partitions $\Vz$ and $\Vo$.  
When $\Delta>0$, for the probability distribution to remain valid:
\begin{align*}
\Delta &= \delta_1\tau = \delta_0(1-\tau),
\end{align*}
It is clear to see that $\delta_0 = \delta_1 \cdot \tau /\left(1- \tau\right)$.
\end{proof}
\label{lemma:scaling}
\end{lemma}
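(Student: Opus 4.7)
The plan is to derive the required relation by enforcing that the reweighted distribution $R_{\theta,e}(\PM)$ is still a valid probability distribution, i.e.\ that its total mass over $\V$ equals one. Since $\Vz$ and $\Vo$ partition $\V$, the normalization splits cleanly into contributions from the two subsets, and the definition of the bit-flip reweighting assigns a single multiplicative factor to each subset depending on the coin flip $e$. This makes the total reweighted mass easy to compute in closed form, and the proof is essentially a one-line normalization argument.

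Concretely, I would start in the case $e=1$ by writing
\begin{equation*}
\sum_{x\in\V} R_{\theta,e}(\PM)(x) = (1+\delta_1)\sum_{x\in\Vo}\PM(x) + (1-\delta_0)\sum_{x\in\Vz}\PM(x),
\end{equation*}
which by the definition $\tau=\sum_{x\in\Vo}\PM(x)$ simplifies to $(1+\delta_1)\tau + (1-\delta_0)(1-\tau)$. Setting this equal to one and cancelling the constant $\tau+(1-\tau)=1$ reduces the condition to $\delta_1\tau = \delta_0(1-\tau)$. Dividing by $1-\tau$, which is permissible exactly because the lemma assumes $\tau<1$, yields $\delta_0 = \delta_1\tau/(1-\tau)$, which is the claimed identity. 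This is also the place where the hypothesis $\tau<1$ is used essentially, rather than cosmetically.

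I would then repeat the same bookkeeping for $e=0$ and observe that the sum becomes $(1-\delta_1)\tau + (1+\delta_0)(1-\tau)$, so the same cancellation again produces $\delta_1\tau = \delta_0(1-\tau)$. Thus a single identity governs both coin-flip outcomes, which is exactly the symmetry that later supports the unbiasedness claim $\mathbb{E}_e[R_{\theta,e}(\PM)] = \PM$. I do not anticipate any real obstacle here; the only point worth flagging separately from the normalization step is the implicit side constraint that $\delta_0$ and $\delta_1$ must lie in $[0,1]$ so that every individual reweighted probability stays in $[0,1]$. That constraint does not affect the derivation of the stated relation and can be noted alongside without expanding the proof.
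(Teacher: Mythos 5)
Your proposal is correct and follows essentially the same route as the paper: the paper's proof also equates the probability mass gained by one partition with the mass lost by the other (writing $\Delta = \delta_1\tau = \delta_0(1-\tau)$), which is precisely your normalization condition, and then divides by $1-\tau$ using $\tau<1$. Your version merely spells out the sum over $\V$ explicitly and checks both values of $e$, which the paper leaves implicit.
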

Due to the constraint in Eq.~(\ref{eq:ScaleFactor}), when $\delta_1$ ensures a valid probability distribution,  $\delta_0$ also satisfies the corresponding bound. 
Although the unbiased reweighting function is established, implementing watermarking faces a challenge of determining appropriate scaling factors $\delta_0, \delta_1$ that maintain a valid probability distribution while achieving effective watermarking.
This requires carefully handling edge cases and constraints that arise in real-world scenarios.

For practical implementation, we introduce a base scaling factor $\tilde{\delta}\in[0,1]$ and derive $\delta_1$ adaptively based on $\tau$ from Eq.~(\ref{eq:delta_1}). 
However, two scenarios\footnote{For ease of presentation, we assume $(1+\delta_1)\tau > (1+\delta_0)(1--\tau)$ and focus on  the edge outlier caused by expanding $\Vo$.} require special attention:
(1) If $\tau=0$, we set $\delta_1=0$ since all tokens in $\Vo$ have zero probability and any non-zero $\delta_1$ would have no effect on the probability distribution and will break the symmetry of our reweighting scheme when ${e}=1$.
(2) If $(1+\tilde{\delta}) \tau>1$, we set $\delta_1=(1-\tau)/\tau$ 
which makes the probability of $\Vo$ become $1$ and the probability of $\Vz$ become $0$,
ensuring the reweighted distribution remains a valid probability distribution over two partitions. Consequently, the  scaling factor $\delta_1$ becomes:
\begin{equation}
\begin{aligned}
\delta_1 = \begin{cases}
    0 & \text{if $\tau=0$},\\
    (1-\tau)/\tau & \text{if } (1+\tilde{\delta})\tau>1, \\
    \tilde{\delta} & \text{otherwise}.
\end{cases}
\end{aligned}
\label{eq:delta_1}
\end{equation}
It is easy to check this piecewise definition of $\delta_1$ ensures a valid probability distribution for all possible cases.

\begin{theorem}[Bit-Flip unbiased reweighting]
For any probability distribution over a vocabulary $\V$ and a reweighting function $R_{\theta,e}$ defined above, we have $E_{e}[R_{\theta,{e}}(P)] = P$.
\label{thm:single-layer}
\end{theorem}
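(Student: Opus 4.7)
The plan is to exploit the pointwise symmetry of the reweighting together with the fairness of the coin flip, so that the proof reduces to a two-line computation per token. Specifically, because expectations are linear and the vocabulary is finite, it suffices to show that for every token $x\in\V$ we have $\mathbb{E}_e[R_{\theta,e}(P)(x)]=P(x)$; summing this pointwise identity gives the distributional identity in the theorem statement.

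First I would split into two cases according to the partition. If $x\in\Vo$, Eq.~(\ref{eq:bit-flip reweighting}) gives $R_{\theta,e}(P)(x)=(1+\delta_1)P(x)$ when $e=1$ and $R_{\theta,e}(P)(x)=(1-\delta_1)P(x)$ when $e=0$; since $e$ is a fair coin flip, $\Pr(e=1)=\Pr(e=0)=\tfrac12$, and so the expectation collapses to $\tfrac12[(1+\delta_1)+(1-\delta_1)]P(x)=P(x)$. The case $x\in\Vz$ is symmetric, with $\delta_0$ replacing $\delta_1$ and the roles of $e=0,1$ swapped, and yields $\mathbb{E}_e[R_{\theta,e}(P)(x)]=P(x)$ in exactly the same way. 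Summing over $x\in\Vz\cup\Vo=\V$ then gives $\mathbb{E}_e[R_{\theta,e}(P)]=P$.

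The main obstacle, and the only thing beyond routine algebra, is confirming that the above argument is not disturbed by the edge-case definition of $\delta_1$ in Eq.~(\ref{eq:delta_1}) and the induced value of $\delta_0$ from Lemma~\ref{lemma:scaling}. I would handle the three branches separately: when $\tau=0$ we have $\delta_1=0$ and Lemma~\ref{lemma:scaling} is vacuous (no probability mass in $\Vo$ to redistribute), so the reweighting is the identity and unbiasedness is trivial; in the clamped branch $(1+\tilde\delta)\tau>1$ we get $\delta_1=(1-\tau)/\tau$ and hence $\delta_0=1$, which still produces valid nonnegative distributions under both $e=0$ and $e=1$ (I would briefly verify $\sum_x R_{\theta,e}(P)(x)=1$ in both branches using $\tau+(1-\tau)=1$), and the $\pm\delta_i$ symmetry remains intact; in the generic branch $\delta_1=\tilde\delta$ and $\delta_0=\tilde\delta\tau/(1-\tau)$, and again the $\pm$ structure is preserved.

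Because the pointwise expectation argument only uses the algebraic identity $(1+\delta_i)+(1-\delta_i)=2$ and never the specific value of $\delta_i$, it applies uniformly across all three branches of Eq.~(\ref{eq:delta_1}). Thus once the edge-case checks are in place, the theorem follows immediately by linearity of expectation over $x\in\V$.
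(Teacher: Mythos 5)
Your proposal is correct and follows essentially the same route as the paper's proof in App.~A.1: a pointwise case split on $x\in\Vo$ versus $x\in\Vz$, using the fairness of $e$ so that the $(1+\delta_i)$ and $(1-\delta_i)$ terms average to $1$, independently of the actual values of $\delta_0,\delta_1$. Your explicit verification of the three branches of Eq.~(\ref{eq:delta_1}) is a slightly more careful treatment of what the paper dispatches with the remark that the argument holds for any $\delta_1$ (and induced $\delta_0$) maintaining a valid distribution, but it is the same argument.
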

This property follows naturally from our design: we use fair coin flips to determine which partition's probabilities to increase or decrease, and ensure the probability changes are equal in magnitude. 
The symmetry of these adjustments guarantees unbiasedness while enabling watermark detection through token count analysis---a key advantage over previous cumulative probability-based approaches.
The proof of Theorem~\ref{thm:single-layer} can be found in App.~\ref{proof:unbiased_single}.
Note that both $\Vz$ and $\Vo$ have equal probabilities of being the green list due to fair coin flips. To obtain a consistent proportion for the z-test, both sets should have equal size: $|\Vz|=|\Vo|=|\V|/2$.

\subsection{Multilayer Unbiased Reweighting}
\label{sec:multilayer}
\hlt{In Soft Red List~\cite{kirchenbauer2023watermark}, watermark detectability can be enhanced by combining a smaller green list proportion $\gamma$ with a larger logit adjustment value $\delta$.
Since Soft Red List imposes no constraints on the unbiased property, $\delta$ can be arbitrarily large---in the extreme case, restricting generation to only green list tokens~\cite{kirchenbauer2023watermark}.}
In contrast, the bit-flip reweighting is constrained by unbiased probability adjustment---with only a single layer reweighting (one bipartition), the watermark may be weak and difficult to detect due to an imbalanced probability distribution. 
To address this limitation, we propose a multilayer reweighting mechanism that preserves the unbiased property while enhancing watermark detectability through multiple independent bipartitions and fair coin flips to iteratively adjust probability distributions, as shown in Fig.~\ref{fig:multilayer}.

\begin{figure}[t!]
\centering
\includegraphics[width=0.99\linewidth]{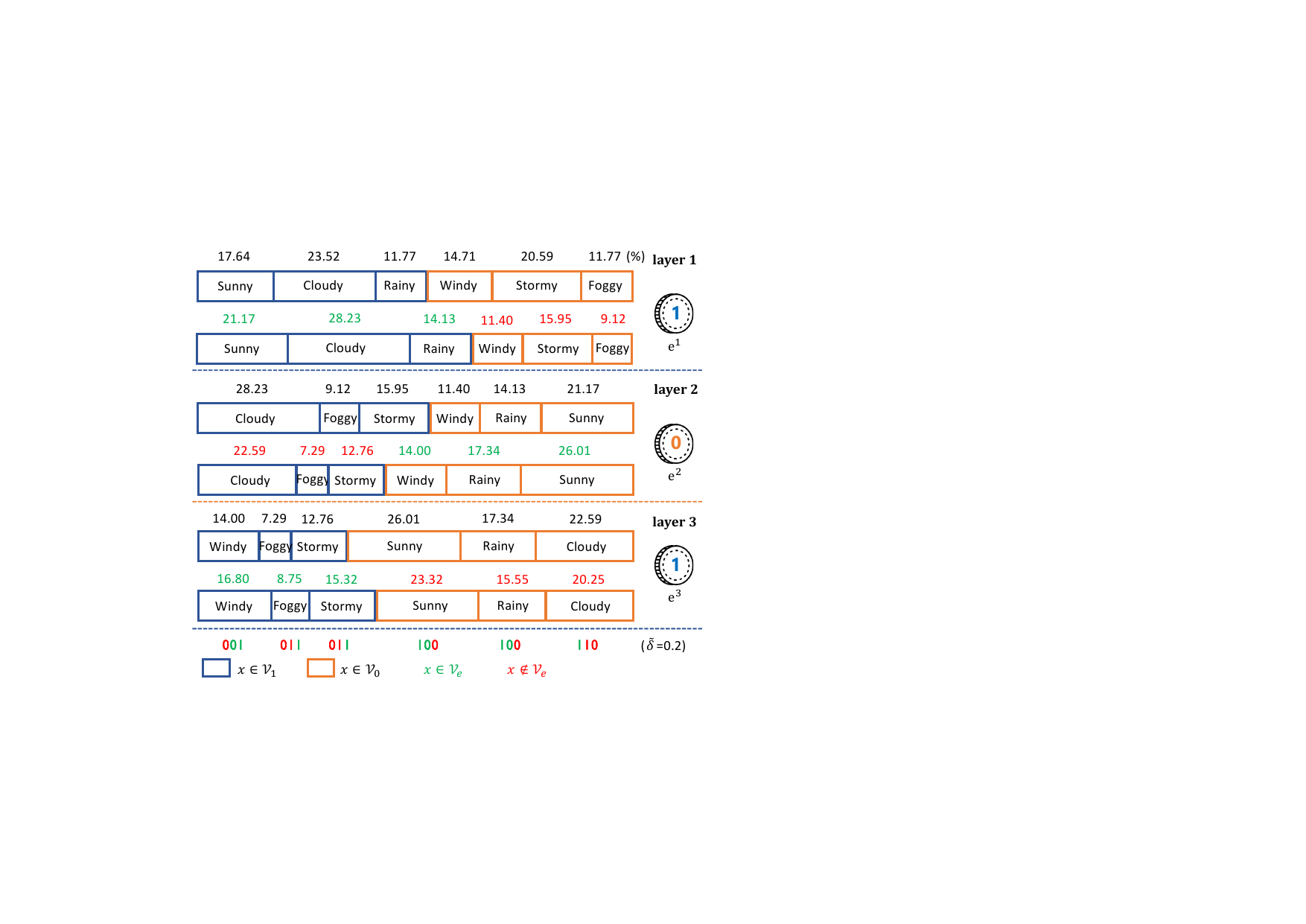}
\vspace{-2pt}
\caption{Multilayer bit-flip reweighting. 
Given multiple bipartitions, each reweighting layer adjusts the probability from the previous reweighting layer based on the bipartition and the fair coin flip result of this layer.
\hlt{In layer $1$, ``Sunny'' belongs to $\Vo$ and $e=1$, making ``Sunny'' become green and gain probability. 
In layer $2$,  ``Sunny'' belongs to $\Vz$ and $e=0$, making ``Sunny'' become green and gain probability.
In layer $3$, ``Sunny'' belongs to $\Vo$ and $e=1$, making ``Sunny'' become red and lose probability.}}
\label{fig:multilayer}
\vskip -0.05in
\end{figure}

Formally, given a sequence of independent vocabulary bipartitions $[(\Vz^1, \Vo^1), \cdots, (\Vz^{d}, \Vo^{d})]$ and a sequence of independent fair coin flips
$\bm{e}=[{e}^1,e^2,\cdots,{e}^{d}]$, i.e. ${e}^i$ follows $\text{Bernoulli}(0.5)$,
the multilayer reweighting is defined as:
\begin{equation*}
    \PMw = R_{\theta^{d},{e}^{d}} \circ R_{\theta^{d-1},{e}^{d-1}} \circ ... \circ R_{\theta^1, {e}^1}(\PM),
\end{equation*}
where \hlt{ $\circ$ denotes function composition (applied from right to left), and} $R_{\theta^i, {e}^i}$ is a bit-flip reweighting function configured with the $i$-th bipartition, the $i$-th \hlt{fair coin flip}, and a base scaling factor $\tilde{\delta}$ used to obtain a valid $\delta_0$ and a valid $\delta_1$ for unbiased reweighting as described in Section~\ref{sec:bit-flip reweighting}. 

When vocabulary bipartitions $({\Vz}^i, {\Vo}^i)$ and \hlt{fair coin flips} $e_i$ are independent across layers, the multilayer reweighting is still unbiased, i.e., $E_{\bm{e}}[\PMw]=\PM$. 
A discussion and proof of this property can be found in App.~\ref{sec:analyze_multilayer}.
\hlt{\subsection{XOR-enhanced Position Allocation}}
\label{subsec:MultiBit}
Integrating with the multilayer unbiased reweighting mechanism, we propose an unbiased multi-bit watermarking method via \hlt{XOR-enhanced position allocation technique}, which can embed messages into text while maintaining text quality and can extract the message from the text.

\textbf{Message embedding}. 
To embed a message $\bm{m}\in\{0,1\}^\ell$ into generated text, we pseudorandomly encode it into a sequence of balanced bits as \hlt{fair coin flips} $\bm{e}$ that guides our multilayer reweighting.
This encoding process is achieved through a one-time pad with XOR operations.
For a single token generation, fair coin flips $\bm{e}$ are constructed as follows: 

1) \textit{Message bit selection:}
A message index $p\in\{1,\cdots,\ell\}$ is pseudorandomly selected using previous tokens in a sliding window as a seed. 
Then, $\bm{m}[p]$ is one bit of the message that will be embedded in the generated token.

2) \textit{Random mask generation:}
A sequence of balanced bits $\bm{b}=[b^1,b^2,\cdots,b^{d}]$ is pseudorandomly sampled from $\text{Bernoulli}(0.5)$ using previous tokens in a sliding window as a seed.
These bits serve as a one-time pad mask.

3) \hlt{\textit{Fair coin flip computation:}}
To encode the message bit, for each layer, a \hlt{fair coin flip} is calculated by ${e}^i=\bm{m}[p]\oplus b^i$, where $\oplus$ is the XOR operation and $i\in\{1,2,\cdots,d\}$. 
These fair coin flips are used in the \hlt{multilayer bit-flip reweighting}. 

This process ensures two crucial properties.
First, the message is recoverable because the XOR operation is reversible --- given $b^i$ and $e^i$ for each token, the corresponding message bit $\bm{m}[p]$ can be recovered.
Second, and equally important, the unbiased property is guaranteed. 
This is because XORing any fixed bit (in our case, $\bm{m}[p]$) with a random bit sampled from $\text{Bernoulli}(0.5)$ produces a result bit that also follows $\text{Bernoulli}(0.5)$ (see App.~\ref{proof:xor} for proof) --- a property that ensures fair coin flips for unbiased reweighting regardless of the message content. 
See App.~\ref{sec:embedding_watermark} for the complete watermarked text generation algorithm.

\hlt{Note that in this encoding process, message bits and one-time pad masks are contructed pseudorandomly for each token during text generation. 
The vocabulary partitions used in unbiased reweighting are constructed pseudorandomly before the entire text generation process.  
Compared to existing methods which construct vocabulary partitions or permutation for each token~\cite{kirchenbauer2023watermark, hu2023unbiased, wu2023dipmark}, this approach is more efficient.}

\hlt{\textbf{Message extraction}}.
\label{subsec:DetectionAndExtraction}
The embedded message $\bm{m}$ is extracted from the watermarked text by analyzing the statistical patterns created by the multilayer reweighting process.
The approach is extended from the voting matrix method introduced in Section \ref{sec:multibit_extraction} adapting the multilayer reweighting and the XOR operations. 
Given a text segment, votes of message bits are gathered from each token $x$ as follows: 

1) \hlt{\textit{Recovering variables used by encoding:}} 
The message index $p$ and the one-time pad mask $\bm{b}=[b^1,b^2, \cdots, b^{d}]$ used in the generation phase are reconstructed pseudorandomly using the same key. 
\hlt{Note the message $\bm{m}[p]$ is unknown but can be reconstructed if the $e^i,i\in\{1,2,\cdots,d\}$ is known.}

2) \hlt{\textit{Gathering votes of message bits:}}  
To reconstruct the message bit, for each layer $i$, the fair coin flip $e^i$ is estimated through tokens' subset membership as follows:

\begin{equation*}
\begin{aligned}
    \hat{{e}}^i = \begin{cases} 
      0 & \text{if } x \in \Vz^i, \\
      1 & \text{if } x \in \Vo^i.
   \end{cases}
\end{aligned}
\end{equation*}
This estimation leverages the statistical bias introduced during generation --- tokens were more likely from partitions corresponding to the respective coin flip results. 

The message bit $\bm{m}[p]$ is estimated by $\hat{{e}}^i \oplus b^i$, which is a reversal of the XOR operation, and is taken as a vote of the message bit.
This vote is accumulated by incrementing $\boldsymbol{M}[p][\hat{{e}}^i \oplus b^i]$ by $1$. 
The multilayer design provides $d$ independent votes per token, enhancing extraction reliability.

3) \hlt{\textit{Extracting message bits through majority voting:}}
After all tokens are processed, the message is extracted through the majority vote bit by bit as follows:
\begin{equation*}
\begin{aligned}
 \bm{m}[p] = \arg\max(\boldsymbol{M}[p][:]) \text{ for } p\in\{1,2,\cdots, \ell\}.   
\end{aligned}
\end{equation*}
The extraction process reliably recovers the embedded message by leveraging both the deterministic nature of the pseudorandom function and the statistical patterns created by our multilayer reweighting scheme.
See App.~\ref{sec:embedding_watermark} for the complete watermark detection algorithm.

In summary, BiMark achieves unbiased multi-bit watermarking through a carefully orchestrated pipeline. 
During generation, the multilayer unbiased reweighting mechanism applies iterative probability adjustments guided by fair coin flips where message bits are encoded, which creates detectable statistical patterns while preserving the original generation distribution.
During detection, these patterns are verified by analyzing token distribution across bipartitions of multiple layers and aggregating evidence in a voting matrix, enabling message extraction. 
The framework maintains text quality while enabling message embedding capacity with model-agnostic and message-agnostic detection.

\section{Experimental Analyses}
We conduct comprehensive experiments to evaluate BiMark's effectiveness across three key dimensions: message embedding capacity, text quality preservation, and an ablation study of the multilayer mechanism. 
For comparisons, we focus on inference-time methods that are publicly available and support model-agnostic and message-agnostic detection. 
The code is available at: \url{https://github.com/Kx-Feng/BiMark.git}.

\subsection{Message Embedding Capacity}
\textbf{Experimental setup}. 
For experiments of message embedding capacity, the Llama3-8B model~\cite{llama3modelcard} is used for text generation with temperature $1.0$ and top-$50$ sampling.
C4-RealNewslike~\cite{c4} dataset is used as prompts.
For pseudorandom operations of watermarking methods, a 2-token sliding context window is used for seeding.
For evaluating the generation quality of language models with watermarks, perplexity of generated text is calculated using Gemma-9B~\cite{gemma_2024} as an oracle model which has more parameters than Llama3-8B.
The baseline methods include MPAC~\cite{MPAC}, Soft Red List~\cite{kirchenbauer2023watermark}, and SynthID~\cite{synthid}.
\hlt{For MPAC and Soft Red List, the proportion $\gamma$ of green lists is $0.5$ for a balance between detectability and text quality, and also fair comparison with BiMark's vocabulary bipartiiton setting.
For SynthID, the number of tournaments is $30$, as recommended in the default setting.
For BiMark, the base scaling factor $\tilde\delta$ is $1.0$, and the number of layers $d$ is $10$, which provides a balance between detectability and computational efficiency.}

\begin{table}[t]
\begin{center}
\caption{Message extraction rate.}
\resizebox{0.99\columnwidth}{!}
{
\begin{threeparttable}
\begin{tabular}{c|l|ccccccccc}
\hline
\multirow{3}{*}{Bits}&\multicolumn{1}{c|}{\multirow{3}{*}{Method}}&\multicolumn{8}{c}{Text Length (Tokens)}\\\cline{3-10}
&&\multicolumn{2}{c|}{50}&\multicolumn{2}{c|}{100}&\multicolumn{2}{c|}{200}&\multicolumn{2}{c}{300}\\\cline{3-10}
&&Rate&\multicolumn{1}{l|}{PPL }&Rate&\multicolumn{1}{l|}{PPL}&Rate&\multicolumn{1}{l|}{PPL}&Rate&PPL\\\hline
\multirow{3}{*}{8}&BiMark&\textbf{95.26}&\textbf{8.5}&\textbf{97.62}&\textbf{7.38}&\textbf{98.15}&\textbf{5.78}&97.88&\textbf{4.57}\\
&MPAC~(1)&49.49&\underline{9.44}&79.72&\underline{8.64}&89.51&\underline{8.08}&93.48&\underline{7.85}\\
&MPAC~(1.5)&\underline{78.81}&9.76&\underline{89.75}&9.13&\underline{96.4}&8.63&\textbf{98.57}&8.39\\\hline
\multirow{3}{*}{16}&BiMark&\textbf{85.55}&\textbf{8.60}&\textbf{93.31}&\textbf{7.35}&\textbf{95.54}&\textbf{5.83}&\textbf{95.54}&\textbf{4.71}\\
&MPAC~(1)&57.04&\underline{9.36}&68.25&\underline{8.63}&79.31&\underline{8.17}&87.50&\underline{7.89}\\
&MPAC~(1.5)&\underline{66.06}&9.69&\underline{78.21}&8.87&\underline{89.04}&8.45&93.83&8.45\\\hline
\multirow{3}{*}{32}&BiMark&\textbf{66.35}&\textbf{8.63}&\textbf{82.69}&\textbf{7.48}&\textbf{89.68}&\textbf{5.89}&\textbf{90.22}&\textbf{4.75}\\
&MPAC~(1)&45.06&\underline{9.16}&56.79&\underline{8.34}&67.96&\underline{7.92}&74.32&\underline{7.67}\\
&MPAC~(1.5)&\underline{51.03}&9.77&\underline{65.35}&8.96&\underline{78.15}&8.49&84.70&8.35
\\
\hline
\end{tabular}
\begin{tablenotes}
\item[*]
    \hlt{``Rate''($\uparrow$) denotes message extraction rate which is the ratio of correctly extracted bits, and 
    ``PPL'' ($\downarrow$) denotes perplexity. 
    ``MPAC (1)'' and ``MPAC (1.5)'' indicate that the value added to green tokens' logit scores are $\delta=1$ and $\delta=1.5$, respectively. }
\end{tablenotes}
\end{threeparttable}
}
\label{tab:multibit_extraction}
\end{center}
\vskip-0.1in
\end{table}

\begin{table}[t]
\begin{center}
\caption{8-Bit message extraction rate from damaged text.}
\resizebox{0.99\columnwidth}{!}{
\begin{tabular}{c|l|lllllll}
\hline
\multirow{2}{*}{Ratio}&\multicolumn{1}{c|}{\multirow{2}{*}{Method}}&\multicolumn{7}{c}{Length}\\\cline{3-9}
&\multicolumn{1}{c|}{}&\multicolumn{1}{c}{25}&\multicolumn{1}{c}{50}&\multicolumn{1}{c}{100}&\multicolumn{1}{c}{150}&\multicolumn{1}{c}{200}&\multicolumn{1}{c}{250}&\multicolumn{1}{c}{300}\\\hline
\multirow{3}{*}{0.1}&BiMark&\textbf{70.25}&\textbf{82.4}&\textbf{91.02}&\textbf{93.74}&\textbf{94.94}&\textbf{95.23}&\textbf{95.24}\\
&MPAC~(1)&50.38&59.5&67.08&73.41&76.34&80.15&82.1\\
&MPAC~(1.5)&\underline{56.42}&\underline{66.8}&\underline{76.85}&\underline{82.64}&\underline{86.55}&\underline{89.33}&\underline{91.62}\\\hline
\multirow{3}{*}{0.2}&BiMark&\textbf{60.61}&\textbf{71.6}&\textbf{81.71}&\textbf{86.29}&\textbf{88.62}&\textbf{89.39}&\textbf{90.01}\\
&MPAC~(1)&45.51&52.83&60.72&64.29&67.31&70.03&72.19\\
&MPAC~(1.5)&\underline{50.04}&\underline{58.04}&\underline{67.36}&\underline{73.1}&\underline{76.37}&\underline{78.99}&\underline{80.85}\\\hline
\multirow{3}{*}{0.3}&BiMark&\textbf{53.37}&\textbf{62.52}&\textbf{70.37}&\textbf{74.22}&\textbf{76.23}&\textbf{77.3}&\textbf{78.23}\\
&MPAC~(1)&43.18&48.71&53.36&56.3&58.29&59.78&60.57\\
&MPAC~(1.5)&\underline{45.59}&\underline{51.16}&\underline{58.11}&\underline{62.24}&\underline{64.73}&\underline{67.07}&\underline{68.31}
\\\hline
\end{tabular}
}
\label{tab:multibit_substitude}
\end{center}
\vskip-0.12in
\end{table}

\textbf{Multi-bit watermarking scenario}.
In this experiment, we compare BiMark with MPAC~\cite{MPAC}, a state-of-the-art multi-bit watermarking method that is model-agnostic and message-agnostic detectable. 
We test with varying message lengths (8, 16, and 32 bits) and text lengths (50, 100, 200, and 300 tokens), \hlt{using message extraction rate, i.e. the bit accuracy between embedded messages and extracted messages, as the evaluation metric.}
As shown in Tab.~\ref{tab:multibit_extraction}, 
\hlt{
MPAC significantly sacrifices text quality when the value $\delta$ added to logit scores of green tokens grows from $1$ to $1.5$, even though it improves the message extraction.} 
In contrast, BiMark consistently achieves higher extraction rates with lower perplexity. 
For 50-token texts, BiMark improves message extraction rates by 20.87\%, 29.50\%, and 30.02\% for 8-bit, 16-bit, and 32-bit messages respectively, compared to MPAC.
The performance improvement is more evident with both shorter texts and longer messages, which typically present greater challenges in watermarking.

We further evaluate resilience against synonym substitution attacks~\cite{jovanovic2024watermark,zhang2024large,hou2023semstamp,ren2023robust} using BERT~\cite{BERT} and WordNet~\cite{WordNet}. 
Tab.~\ref{tab:multibit_substitude} demonstrates BiMark's superior performance---for 100-token texts, BiMark maintains 18.44\%, 21.30\%, and 26.24\% higher extraction rates than those of MPAC under 0.1, 0.2, and 0.3 text substitution ratios, respectively.
This enhanced resilience is attributed to the fact that the multilayer mechanism provides \hlt{abundant} watermark evidences,  leading the statistical patterns of detection to be more distinguishable between watermarked and non-watermarked text.

\begin{figure}[t]
\centering
\includegraphics[width=0.99\linewidth]{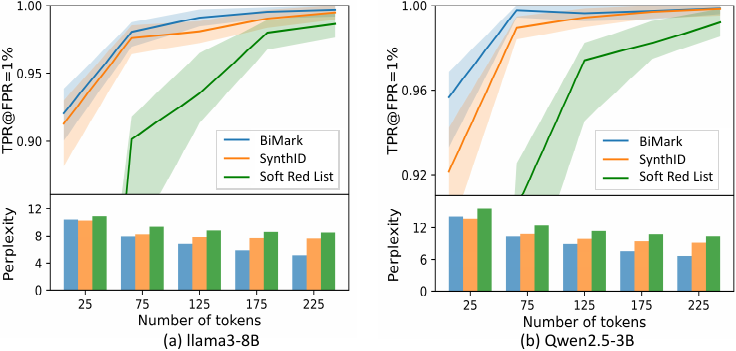}
\vspace{-7pt}
\caption{Zero-bit Watermark Detection.}
 \vskip -0.1in
 \label{fig:zerobit_detection}
\end{figure}
\textbf{Zero-bit watermarking scenario}.
We access BiMark's performance on zero-bit watermarking by embedding 1-bit messages through our framework and compare it with two state-of-the-art model-agnostic zero-bit watermarking methods: Soft Red List~\cite{kirchenbauer2023watermark} and SynthID~\cite{synthid}, \hlt{which are biased and unbiased methods, respectively.} 
\hlt{True positive rate (TPR) at 1\% false positive rate (FPR) is used as the evaluation metric of this task.}
The results in Fig.~\ref{fig:zerobit_detection} show that BiMark achieves significantly improved performance and comparable detection performance compared to Soft Red List and SynthID, respectively, while maintaining lower perplexity on both Llama3-8B and Qwen2.5~\cite{qwen2.5}.
\subsection{Text Quality Preservation}

Watermarks' impact on generated text quality is assessed in two downstream tasks of language models: text summarization and machine translation.
For text summarization, BART-large~\cite{bart} is employed on the CNN/DailyMail dataset~\cite{see-etal-2017-get}, where the performance is evaluated using BERTScore-F1~\cite{bert-score} and ROUGE-1~\cite{lin2004rouge}.
For machine translation, MBart~\cite{bart} is employed on the WMT'16 En-Ro subset~\cite{bojar-EtAl:2016:WMT1}, where the performance is evaluated using BLEU~\cite{papineni2002bleu}.

\hlt{Baseline methods include Soft Red List, Gumbel Sampling~\cite{kuditipudi2023robust}, $\gamma$-Reweight~\cite{hu2023unbiased}, DiPmark~\cite{wu2023dipmark}, and SynthID~\cite{synthid}.}
\hlt{Soft Red List, SynthID, and BiMark use the same settings as previous experiments, while other methods take default or recommended settings in the original works. 
In Tab.~\ref{tab:downstream}, Soft Red List shows obvious quality degradation, where performance drops significantly as watermark strength increases (indicated by $\delta$).} 
Among unbiased methods, BiMark achieves comparable performance while additionally supporting embedding and extracting useful messages. 
The consistent performance across both summarization and translation tasks demonstrates that our unbiased multilayer watermarking successfully preserves language models' essential abilities for downstream tasks.
\begin{table}[t]
\caption{LLM downstream tasks performance.}
\begin{center}
\resizebox{0.99\columnwidth}{!}{
\begin{threeparttable}
\begin{tabular}{l@{}|c|c|c|c}
\toprule
 \multicolumn{1}{c|}{\multirow{2}{*}{Method}} &   \multicolumn{2}{c|}{Text Summarization} & \multicolumn{2}{c}{Machine Translation}  \\ 
  & { BERTScore } &{\quad ROUGE-1}    &{ BERTScore }&{ BLEU}\\
\midrule
No Watermark & \underline{32.45}{\scriptsize$\pm$ .01} & \textbf{38.32}{\scriptsize$\pm$ .02}   & \textbf{56.21}{\scriptsize$\pm$.03} & 21.93{\scriptsize$\pm$.17} \\
\hline 
Soft Red List~(1.0)& 32.11{\scriptsize$\pm$.02} & 37.97{\scriptsize$\pm$.04}   & 55.74{\scriptsize$\pm$.18} & 21.36{\scriptsize$\pm$.16}\\
Soft Red List~(1.5)~~& 31.61{\scriptsize$\pm$.04} & 37.51{\scriptsize$\pm$.06}   & 55.06{\scriptsize$\pm$.15} & 20.67{\scriptsize$\pm$.25} \\
Soft Red List~(2.0)~& 31.15{\scriptsize$\pm$.02} & 36.99{\scriptsize$\pm$.05}   & 54.17{\scriptsize$\pm$.18} & 19.63{\scriptsize$\pm$.02}\\
\hline 
Gumbel Sampling & 32.22{\scriptsize$\pm$.03}&   38.19{\scriptsize$\pm$.02}& 56.12{\scriptsize$\pm$.06}&\underline{22.14}{\scriptsize$\pm$.05}\\
$\gamma$-Reweight & 32.25{\scriptsize$\pm$.02} & 38.09{\scriptsize$\pm$ .02}   & 55.67{\scriptsize$\pm$.05} & 21.49{\scriptsize$\pm$.14}\\
DiPmark    & 32.33{\scriptsize$\pm$.02}&   38.21{\scriptsize$\pm$.03}&         56.11{\scriptsize$\pm$.04}&21.87{\scriptsize$\pm$.06}\\
SynthID& \underline{32.45}{\scriptsize$\pm$.03} & \textbf{38.32}{\scriptsize$\pm$.04}   & \underline{56.17}{\scriptsize$\pm$.09} & 22.11{\scriptsize$\pm$.03}\\
\hline 
BiMark (Ours)& \textbf{32.48}{\scriptsize$\pm$.03} & \textbf{38.32}{\scriptsize$\pm$.03}   & 56.14{\scriptsize$\pm$.07} & \textbf{22.15}{\scriptsize$\pm$.09} \\
\bottomrule
\end{tabular}
\begin{tablenotes}
\item[*]
    \hlt{``Soft Red List(1.0)''  indicates that the value added to green tokens' logit scores are $\delta=1$. The same applies to ``Soft Red List(1.5)'' and ``Soft Red List(2.0)''}
\end{tablenotes}
\end{threeparttable}
}
\label{tab:downstream}
\end{center}
\vskip -0.15in
\end{table}
\subsection{Ablation Study}
\label{sec:ablation}
\textbf{Computational cost analysis}.
While multilayer reweighting introduces additional computational overhead during token generation, in our experiments, generating a single token using BiMark with a batch size of $1$ and $50$ takes $0.036$s and $0.047$s on average, respectively, showing that our BiMark is efficient in parallelized inference.

\textbf{Impact of the multilayer mechanism}.
An ablation study is conducted across the following four scenarios: 

1) \textit{Layer number analysis}: 
Fig.~\ref{fig:ablation} (a) evaluates watermark detection across varying layer numbers $d=1,5,10,20$ with fixed $\tilde\delta=1.0$. Results show that detectability initially improves with increasing layers until reaching a peak, then decreases as the number of layers becomes excessive.
2) \textit{Individual layer contribution}: 
Fig.~\ref{fig:ablation} (b) analyzes each layer's contribution to detection with $\tilde\delta=1.0$ and $d=10$.
Results show that all layers contribute to detection, with shallow layers providing particularly strong signals.
3) \textit{Scaling factor analysis}:
Fig.~\ref{fig:ablation} (c) evaluates detection performance with a base scaling factor $\tilde\delta$ ranging from $0.1$ to $1.0$ and fixed $d=50$. 
Performance follows the same pattern as layer analysis.
4) \textit{Resilience analysis}: 
Fig.~\ref{fig:ablation} (d) assesses resilience against 10\% word substitution across different layer numbers.
Results indicate that watermark resilience improves with increased layers, slightly decreases after reaching a peak, but remains superior to single-layer approaches.

\begin{figure}[t]
\centering
\includegraphics[width=1.0\linewidth]{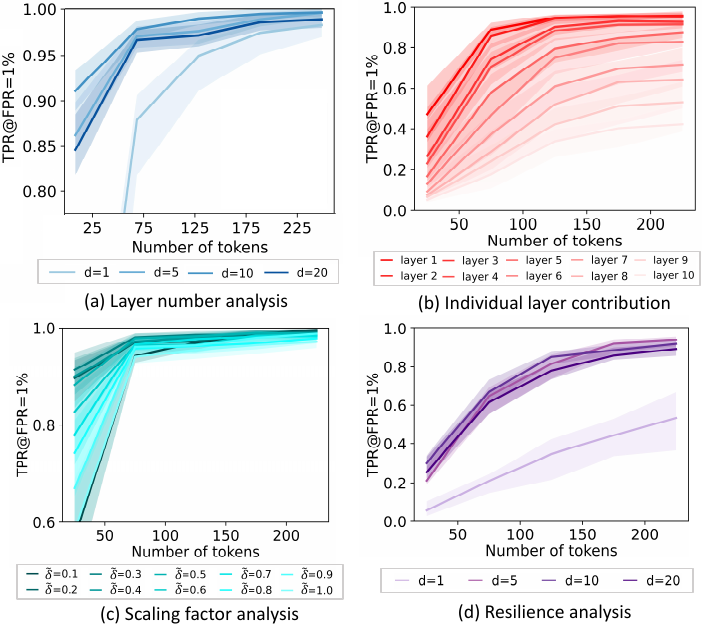}
\vspace{-17pt}
\caption{Ablation experiments of multilayer reweighting.}
\label{fig:ablation}
 \vskip -0.05in
\end{figure}

As shown in App.~\ref{sec:type-II}, watermark detectability depends on the base scaling factor $\tilde\delta$ and probability balance between bipartitions. 
Observation (2) occurs because multilayer reweighting is iterative, with shallow layers having a dominant impact on the final probability distribution. Since shallow layers typically have more evenly distributed probabilities over tokens, their reweighting effects are more substantial and cannot be reversed by subsequent layers.
Observation (1) occurs because while adding layers initially strengthens the watermark signal and improves detectability, excessive layers diminish performance since deeper layers contribute minimally to detection while adding noise.
Observation (3) occurs because appropriate base scaling factors enable gradual probability reweighting across layers, allowing deeper layers to contribute to detection.
Observation (4) occurs because multilayer reweighting creates multiple independent watermark evidence, which enhances the distinguishability between watermarked and clean text.

\section{Conclusion}
This work presents BiMark, a novel watermarking framework for LLMs that simultaneously achieves three critical capabilities: text quality preservation, message embedding capacity, and model-agnostic detection.
These properties are essential for practical deployment and are validated through both theoretical and empirical analysis. 
BiMark's superior message embedding capacity stems from subtle patterns generated by a multilayer mechanism, which reveals new possibilities for exploring more secure watermarking via fine-grained probability distribution reweighting. 

\newpage

\section*{Acknowledgement}
This research was partly funded by the Australian Research Council (ARC) under grants FT210100097, DP240101547, DP250102634, and the CSIRO-National Science Foundation (US) AI Research Collaboration Program.

\section*{Impact Statement}
This paper presents BiMark, a novel watermarking framework addressing critical challenges in AI-generated content authentication, contributing to responsible LLM deployment. 
By providing a model-agnostic solution that provides message embedding and text quality preservation, BiMark supports responsible AI governance, contributing to a more transparent and trustworthy AI ecosystem.


\bibliography{paper}
\bibliographystyle{icml2025}

\newpage
\appendix
\onecolumn
\section{Proof \& Analyses}

\subsection{Proof of Theorem \ref{thm:single-layer}}

\begin{proof}
\label{proof:unbiased_single}
For any $x \in \mathcal{V}$, we consider $E_{e}[R_{\theta,e}(\PM)(x)]$ by examining two cases:

Case 1: If $x \in \mathcal{V}_1$,
\begin{equation*}
    \begin{aligned}
        E_{e}[R_{\theta,e}(\PM)(x)] &= \frac{1}{2}[(1+\delta_1)\PM(x)] + \frac{1}{2}[(1-\delta_1)\PM(x)] \\
&= \PM(x).
    \end{aligned}
\end{equation*}
Case 2: If $x \in \mathcal{V}_0$,
\begin{equation*}
    \begin{aligned}
        E_{e}[R_{\theta,e}(\PM)(x)] &= \frac{1}{2}[(1-\delta_0)\PM(x)] + \frac{1}{2}[(1+\delta_0)\PM(x)] \\
&= \PM(x).
    \end{aligned}
\end{equation*}
Since this holds for all $x \in \mathcal{V}$, we have $E_{e}[R_{\theta,e}(\PM)] = \PM$. Note that this result holds regardless of the specific value of $\delta_1$ (and consequently $\delta_0$), as long as they maintain a valid probability distribution.
\end{proof}

\subsection{Type-II Error Analysis of Single-Layer Unbiased Reweighting}
\label{sec:type-II}

To understand the multilayer watermarking scheme, we first analyze the Type-II error rate (false negative rate) of watermark detection in a single-layer reweighting. While the multilayer reweighting involves complex interactions between layers, this single-layer analysis provides crucial insights into the fundamental behavior of the multilayer approach.

\hlt{\textbf{Statistical properties of a single-layer reweighting.}}
Given a text sequence of length $T$, let $G$ denote the count of tokens belonging to green lists during detection. Under the null hypothesis $H_0$ (i.e., the token sequence contains no watermark), the expected proportion of green tokens is $0.5$ due to balanced vocabulary bipartitions. Under the alternative hypothesis $H_1$ (i.e., the token sequence is watermarked), this proportion deviates from $0.5$ due to the single-layer reweighting scheme.
For a single token $x_t$ at position $t$, let $G_t$ be a random variable indicating whether $x_t$ belongs to the corresponding green list. 

\begin{corollary}
Under a single-layer unbiased reweighting, the expectation and the variance of $G_t$ follow:
\begin{equation*}
    \begin{aligned}
        E[G_t] &= \begin{cases}
        0.5 + \tilde{\delta}\tau_t & \mathrm{if~} 0 \leq \tau_t \leq \frac{1}{1+\tilde{\delta}}, \\
        1.5 - \tau_t & \mathrm{if~} \frac{1}{1+\tilde{\delta}} < \tau_t < 1,
        \end{cases} \label{eq:green_expectation} \\[1em]
        \mathrm{Var}[G_t] &= \begin{cases}
        0.25 - \tilde{\delta}^2\tau_t^2 & \mathrm{if~} 0 \leq \tau_t \leq \frac{1}{1+\tilde{\delta}}, \\
        -0.75+2\tau_t-\tau_t^2 & \mathrm{if~} \frac{1}{1+\tilde{\delta}} < \tau_t < 1,
        \end{cases}
    \end{aligned}
\end{equation*}
where $\tilde{\delta}$ is the base scaling factor for unbiased reweighting.
\label{cor:single_layer_type2error}
\end{corollary}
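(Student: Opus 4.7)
The plan is to exploit the fact that $G_t$ is a Bernoulli indicator (token $x_t$ either lands in the green list or not), so $\mathrm{Var}[G_t] = E[G_t](1 - E[G_t])$ and the entire corollary reduces to a careful computation of $E[G_t]$ in each regime of $\tau_t$. Once the expectation is in hand, both variance formulas follow by routine algebra, so I would organize the proof around the expectation.

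First, I would condition on the fair coin flip $e$. By Eq.~\eqref{eq:bit-flip reweighting}, the reweighted probability mass on $\Vo$ under $e=1$ is $(1+\delta_1)\tau_t$, and the mass on $\Vz$ under $e=0$ is $(1+\delta_0)(1-\tau_t)$. Since the green list is $\Vo$ exactly when $e=1$ and $\Vz$ when $e=0$, and $e$ is fair, I would write
\begin{equation*}
E[G_t] \;=\; \tfrac{1}{2}(1+\delta_1)\tau_t + \tfrac{1}{2}(1+\delta_0)(1-\tau_t).
\end{equation*}
Applying the scaling constraint $\delta_0(1-\tau_t) = \delta_1\tau_t$ from Lemma~\ref{lemma:scaling} (Eq.~\eqref{eq:ScaleFactor}), the cross terms collapse and the expression simplifies to $E[G_t] = \tfrac{1}{2} + \delta_1\tau_t$.

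Second, I would apply the piecewise definition of $\delta_1$ given in Eq.~\eqref{eq:delta_1}. In the typical regime $0 \le \tau_t \le 1/(1+\tilde\delta)$ we have $\delta_1 = \tilde\delta$, so $E[G_t] = 0.5 + \tilde\delta\tau_t$; the degenerate branch $\tau_t = 0$ with $\delta_1 = 0$ is absorbed into this formula since $\tilde\delta\tau_t = 0$ there. In the clipped regime $(1+\tilde\delta)\tau_t > 1$ we have $\delta_1 = (1-\tau_t)/\tau_t$, hence $\delta_1\tau_t = 1 - \tau_t$ and $E[G_t] = 1.5 - \tau_t$. This gives exactly the two cases of the stated expectation.

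Finally, I would derive the variances directly from the Bernoulli identity. A short calculation gives $(0.5 + \tilde\delta\tau_t)(0.5 - \tilde\delta\tau_t) = 0.25 - \tilde\delta^2\tau_t^2$ for the first regime and $(1.5 - \tau_t)(\tau_t - 0.5) = -0.75 + 2\tau_t - \tau_t^2$ for the second. The main obstacle here is not mathematical depth but bookkeeping: one must verify that the three branches of Eq.~\eqref{eq:delta_1} collapse cleanly onto the two regimes of the corollary, and in particular that the boundaries $\tau_t = 0$ and $\tau_t = 1/(1+\tilde\delta)$ are handled consistently by the unified formula $E[G_t] = 0.5 + \delta_1\tau_t$.
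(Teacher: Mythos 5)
Your proposal is correct and follows essentially the same route as the paper's proof: both compute $E[G_t]$ by conditioning on the fair coin flip, collapse the cross terms via the scaling constraint $\delta_0(1-\tau_t)=\delta_1\tau_t$ to get $E[G_t]=0.5+\delta_1\tau_t$, substitute the piecewise definition of $\delta_1$, and obtain the variance from the indicator identity $\mathrm{Var}[G_t]=E[G_t]-(E[G_t])^2$. Your phrasing via conditional probability masses is just a more compact rendering of the paper's sum-over-vocabulary computation.
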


\begin{proof}
Given a vocabulary bipartition $\Vz$ and $\Vo$, and a fair coin flip $e$, in a bit-flip unbiased reweighting, green tokens are defined based on two conditions: when $e=0$, tokens $x\in\mathcal{V}_0$ are green, and when $e=1$, tokens $x\in\mathcal{V}_1$ are green. Since the value of $e$ and the vocabulary bipartition are independent, an indicator $G_t$ follows:
\begin{equation}
    \begin{aligned}
        G_t = \begin{cases}
1 & \text{if } (e=0 \wedge x\in\mathcal{V}_0) \text{ or } (e=1 \wedge x\in\mathcal{V}_1),\\
0 & \text{otherwise}.
\end{cases}
    \end{aligned}
\label{eq:def_Gt}
\end{equation}
Let $\PM^t$ and $\PMw^t$ denote the original and reweighted probability distribution at time step $t$, respectively. Note $\tau=\sum_{x\in\mathcal{V}_1} \PM(x)$, according to the definition of bit-flip unbiased reweighting (see Eq.~(\ref{eq:bit-flip reweighting})), we have:
\begin{equation}
    \begin{aligned}
        \PMw(x|x\in\Vo) &= \begin{cases}
(1+\delta)\PM(x) & \text{if } e=1 \\
(1-\delta)\PM(x) & \text{if } e=0
\end{cases}
    \\[1em]
        \PMw(x|x\in\mathcal{V}_0) &= \begin{cases}
(1-\frac{\delta\tau}{1-\tau})\PM(x) & \text{if } e=1 \\
(1+\frac{\delta\tau}{1-\tau})\PM(x) & \text{if } e=0.
\end{cases}
    \end{aligned}
\label{eq:condition}
\end{equation}
The expectation of $G_t$ can be derived as follows:
\begin{equation*}
    \begin{aligned}
        E[G_t] &= \sum_{x\in\mathcal{V}}[  \PMw^t(x|x\in\mathcal{V}_1) + \PMw^t(x|x\in\mathcal{V}_0)] G_t\\
    &= \sum_{x\in\mathcal{V}}[   \PMw^t(x|x\in\mathcal{V}_1, e=1) P(e=1) +  \PMw^t(x|x\in\mathcal{V}_1, e=0) P(e=0)\\
    &\quad +\PMw^t(x|x\in\mathcal{V}_0, e=1) P(e=1) + \PMw^t(x|x\in\mathcal{V}_0, e=0) P(e=0)]G_t\\
    &= \sum_{x\in\mathcal{V}}[   \PMw^t(x|x\in\mathcal{V}_1, e=1) P(e=1) +  \PMw^t(x|x\in\mathcal{V}_0, e=0) P(e=0)] \qquad(\because Eq.~(\ref{eq:def_Gt}))\\
    &= 0.5 \sum_{x\in\mathcal{V}} [ \PMw^t(x|x\in\mathcal{V}_1, e=1) + \PMw^t(x|x\in\mathcal{V}_0, e=0)] \qquad\qquad~ (\because e\sim \text{Bernoulli}(0.5))\\
    &= 0.5 \sum_{x\in\mathcal{V}} [(1+\delta)P_M^t(x|x\in\mathcal{V}_1) + (1+\frac{\delta\tau_t}{1-\tau_t})P_M^t(x|x\in\mathcal{V}_0)] \qquad\qquad\qquad\qquad (\because Eq.~(\ref{eq:condition}))\\
    &=0.5[(1+\delta)\tau_t + (1+\frac{\delta\tau_t}{1-\tau_t})(1-\tau_t)]\\
    &=0.5+\delta\tau_t.
    \end{aligned}
\end{equation*}
Given that $\delta$ is determined by both $\tilde{\delta}$ and $\tau_t$ in Eq.~(\ref{eq:delta_1}), $E(G_t)$ can be expressed as a piecewise function:
\begin{equation}
    \begin{aligned}
        E[G_t] = \begin{cases}
0.5 + \tilde{\delta}\tau_t & \text{if }  0\leq\tau_t < \frac{1}{1+\tilde{\delta}}, \\
1.5 - \tau_t & \text{if } \tau_t \geq \frac{1}{1+\tilde{\delta}}.
\end{cases}
\end{aligned}
\label{eq:Gt_expectation}
\end{equation}

To derive $\mathrm{Var}[G_t]$, since $G_t$ takes values of either $1$ or $0$, we have $G_t^2=G_t$, and consequently $E[G_t^2]=E[G_t]$. Using the variance formula and this property, we have:
$$\mathrm{Var}[G_t]=E[G_t^2]-E[G_t]^2=E[G_t] - (E[G_t])^2.$$
This leads to the following piecewise expression:
\begin{equation}
    \begin{aligned}
        \mathrm{Var}[G_t] := \begin{cases}
0.25 - \tilde{\delta}^2\tau_t^2 & \text{if } 0 \leq \tau_t < \frac{1}{1+\tilde{\delta}}, \\
-0.75+2\tau_t-\tau_t^2 & \text{if } \tau_t \geq \frac{1}{1+\tilde{\delta}}.
\end{cases}
\end{aligned}
\label{eq:Gt_variance}
\end{equation}
\end{proof}
\textbf{Type-II error analysis.}
The Type-II error rate $\beta$ is the probability of failing to detect a watermark when one is present. For the z-test with significance level $\alpha$:
\begin{equation*}
    \beta=P(z<z_{1-\alpha}|H_1),
\end{equation*}
where $z_{1-\alpha}$ is the critical value and $z$ is our test statistic, since the proportion of green lists is $0.5$:
\begin{equation}
    z=\frac{G/T-0.5}{\sqrt{0.25/T}}.
\label{eq:z-score}
\end{equation}
Under $H_1$, the total count $G$ approximately follows a normal distribution by the Central Limit Theorem:
\begin{equation}
    G\sim N(T\cdot E[G_t], T\cdot \mathrm{Var}[G_t]). 
\label{eq:dis_G}
\end{equation}
\hlt{
Using Eq.~(\ref{eq:z-score}) and Eq.~(\ref{eq:dis_G}), the expectation and variance of $z$ under $H_1$ can be derived, and under $H_1$, it follows:
\begin{equation*}
\begin{aligned}
    z\sim N(2(E[G_t]-0.5)\sqrt{T}, 4\cdot \mathrm{Var}[G_t])
\end{aligned}
\end{equation*}
Consequently, the Type-II error rate is:
\begin{equation*}
\begin{aligned}
    \beta=\Phi(\frac{z_{1-\alpha}-2(E[G_t]-0.5)\sqrt{T}}{2\sqrt{\mathrm{Var}[G_t]}})
\end{aligned}
\end{equation*}
}

\textbf{Key \hlt{insights.}}  
The Type-II error rate decreases when $E[G_t]$ increases or $\mathrm{Var}[G_t]$ decreases. 
When $\tilde\delta=1$, $E[G_t]$ reaches its maximum when $\tau = 0.5$, implying that our watermarking scheme is most effective for high-entropy text, which aligns with previous findings from \cite{kirchenbauer2023watermark} and \cite{kuditipudi2023robust}.
Note that $E[G_t]$ reaches its maximum when $\tau=\frac{1}{1+\tilde\delta}$. 
This actually corresponds to the case when $\tau$ is amplified to occupy the entire probability space. 
In this case, probabilities in $\Vz$ will be shrunk to $0$. As multilayer reweighting is iterative, this process will concentrate probabilities among a few tokens. 
This property indicates that shallow layers of reweighting have more impact on the ultimate reweighted probability distribution. 
Once tokens are shrunk to near-zero probabilities in early layers, even if they are allocated to green lists in deeper layers, their probabilities cannot be effectively recovered.

\subsection{Multilayer Reweighting}
\label{sec:analyze_multilayer}
We analyze the expectation of reweighted probability distribution layer by layer. Let $\PM^i$ denote the distribution after applying $i$ layers of reweighting. For any layer $i$, we know from Theorem~\ref{thm:single-layer} that $E_{e^i}[R_{\theta^i,e^i}(\PM^{i-1})]=\PM^{i-1}$. This property holds true at each layer regardless of the outcomes of previous layers due to the fact: 
\begin{enumerate}
    \item Vocabulary bipartitions $(\Vz^i, \Vo^i)$ is independent of $(\Vz^{i+1}, \Vo^{i+1})$;
    \item Fair coin flip ${e}^i$ is independent of ${e}^{i+1}$;
    \item Bipartitions and fair coin flips are independent of each other. 
\end{enumerate}

Therefore, by analyzing from the innermost layer outward, we conclude that the entire composition maintains the unbiased property:
\begin{equation*}
    E_{\bm{{e}}}[\PMw]=\PM.
\end{equation*}

\subsection{Property of XOR operaion}
\label{proof:xor}

We first define our variables. Let $x\in\{0,1\}$ be our original bit. Let $b\in\{0,1\}$ be a random bit sampled from $\text{Bernoulli}(0.5)$. Let $e=x\oplus b$ be the result of the XOR operation.

We need to prove that $P(e=1)=1/2$ and $P(e=0)=1/2$, regardless of the value of $x$.

when $x=0$, $e=0\oplus b$. If $b=1$, then $e=0$. If $b=0$, then $e=1$. Because $P(b=1)=P(b=0)=1/2$, we have $P(e=0|x=0)=1/2$ and $P(e=1|x=0)=1/2$.

When $x=1$, $e=1\oplus b$. If $b=1$, then $e=0$. If $b=0$, then $e=1$. Because $P(b=1)=P(b=0)=1/2$, we have $P(e=0|x=1)=1/2$ and $P(e=1|x=1)=1/2$.

Since the statement holds true for both possible values of $x$, we can conclude that $e$ follows a $\text{Bernoulli}(0.5)$ distribution regardless of the original value of $x$.

\newpage

\section{More Experiments}
To further assess watermark resilience, watermark detection performance against paraphrasing attacks is evaluated.
These attacks are performed using DIPPER~\cite{DIPPER}, a fine-tuned language model for watermark evasion with controllable lexical and order diversity parameters. 

\begin{table*}[h]
\caption{Watermark detection against paraphrasing attacks.}
\vskip 0.1in
\centering
\resizebox{0.9\columnwidth}{!}{
\begin{threeparttable}
\begin{tabular}{l|llllllllllllllll}
\hline
\multicolumn{1}{c|}{\multirow{3}{*}{Method}} & \multicolumn{16}{c}{Text Length (Token)} \\ \cline{2-17} 
\multicolumn{1}{c|}{} & \multicolumn{4}{c|}{50} & \multicolumn{4}{c|}{100} & \multicolumn{4}{c|}{200} & \multicolumn{4}{c}{300} \\ \cline{2-17} 
\multicolumn{1}{c|}{} & / & (20,0) & (0,20) & \multicolumn{1}{l|}{(20,20)} & / & (20,0) & (0,20) & \multicolumn{1}{l|}{(20,20)} & / & (20,0) & (0,20) & \multicolumn{1}{l|}{(20,20)} & / & (20,0) & (0,20) & (20,20) \\ \hline
Soft Red List & 68.88 & 15.43 & 35.27 & \multicolumn{1}{l|}{13.08} & 92.53 & 32.39 & 69.94 & \multicolumn{1}{l|}{27.4} & 98.11 & 56.25 & 90.84 & \multicolumn{1}{l|}{49.4} & 99.78 & 71.16 & 96.9 & 66.52 \\
SynthID & 97.25 & 54.82 & 87.03 & \multicolumn{1}{l|}{50} & 98.04 & 83.14 & 96.52 & \multicolumn{1}{l|}{76.13} & 99.48 & 97.83 & 99.55 & \multicolumn{1}{l|}{94.75} & 100 & 97.78 & 100 & 97.21 \\
DiPmark & 59.57 & 10.86 & 24.4 & \multicolumn{1}{l|}{10.25} & 76.07 & 23.71 & 34.03 & \multicolumn{1}{l|}{15.56} & 89.94 & 41.62 & 62.63 & \multicolumn{1}{l|}{0.236} & 94.76 & 65.39 & 89.57 & 42.81 \\
BiMark & 97.87 & 67.4 & 89.17 & \multicolumn{1}{l|}{59.94} & 98.42 & 78.37 & 95.71 & \multicolumn{1}{l|}{70.9} & 99.81 & 91.62 & 99.2 & \multicolumn{1}{l|}{87.28} & 100 & 98.93 & 100 & 98.35 \\ \hline
\end{tabular}
\begin{tablenotes}
\item[*]
``(20, 0)'', ``(0,20)'', and ``(20,20)'' follows the notation of (lexical diversity, order diversity).
\end{tablenotes}
\end{threeparttable}
}
\label{tab:paraphrase}
\end{table*}

\section{Algorithms}
\label{sec:embedding_watermark}
Alg.~\ref{alg:watermark_embedding} summarizes the watermarked text generation process for message embedding discussed in Sec.~\ref{subsec:MultiBit}.
Alg.~\ref{alg:watermark_extracting} summarizes the watermark detection process for message extraction discussed in Sec.~\ref{subsec:DetectionAndExtraction}. The overall process is visualized in Fig.~\ref{fig:framework}.
\begin{figure*}[h]
\centering
\includegraphics[width=1\linewidth]{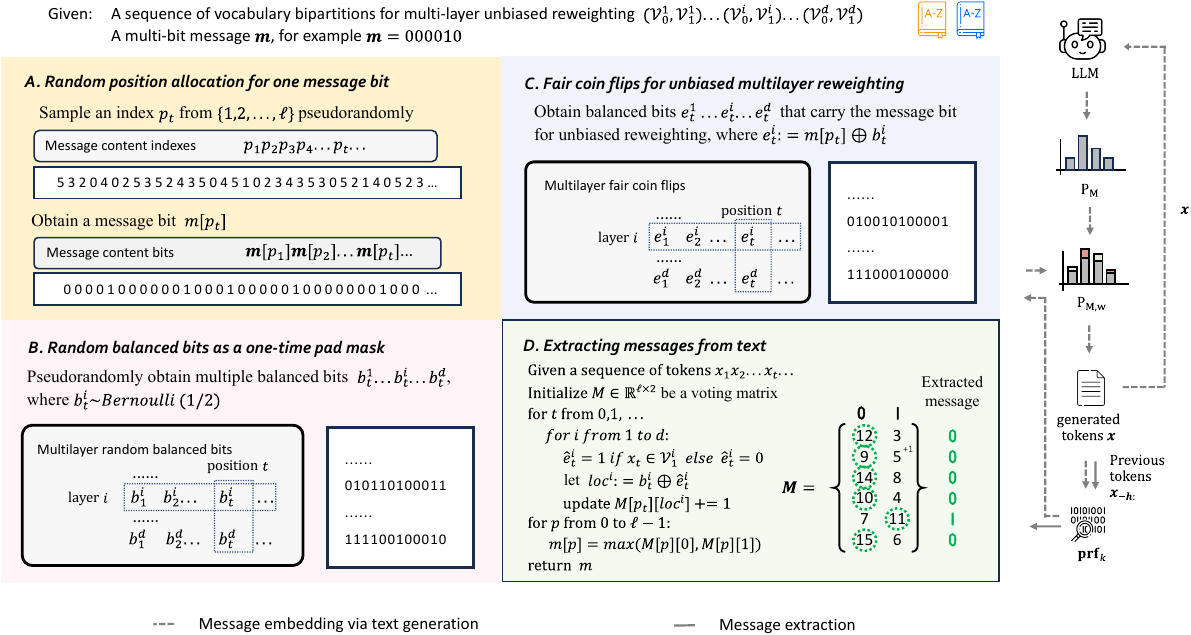}
\caption{The complete process of BiMark. 
The right part shows original LLM text generation.
The watermark embedding process begins when the LLM outputs an original probability distribution $\PM$. 
Step A pseudorandomly selects a message bit $\bm{m}[p]$. 
Step B samples $d$ independent balanced bits $\bm{b}=[b^1,b^2,\cdots,b^{d}]$ from Bernoulli(0.5) as a one-time pad mask. 
Step C applies XOR operations between $\bm{b}$ and $\bm{m}[p]$ to obtain fair coin flips $\bm{e}=[e^1,e^2,\cdots,e^{d}]$, then conducts multilayer unbiased reweighting. 
The next token is sampled from the reweighted distribution, and the process continues iteratively.
Step D processes token sequences to extract messages. 
For each token, the message index $p$ and one-time pad mask $\bm{b}$ are pseudorandomly reconstructed. 
Based on token subset membership, estimations of fair coin flips $\hat{\bm{e}}$ are obtained. 
Using $b^i\oplus \hat{e}^i$, votes for message bits are collected. 
The final message is extracted via majority voting.
}
\label{fig:framework}
\end{figure*}

\twocolumn
\begin{algorithm}[h]
\caption{Embedding multi-bit message}
\label{alg:watermark_embedding}
\begin{algorithmic}
\STATE {\bfseries Input:} a language model LM
\STATE \qquad\quad a sequence of tokens as a prompt
\STATE \qquad\quad a sequence of balanced vocabulary bipartitions
\STATE \qquad\quad $ [(\mathcal{V}_0^1, \mathcal{V}_1^1), (\mathcal{V}_0^2, \mathcal{V}_1^2),...,(\mathcal{V}_0^{d}, \mathcal{V}_1^{d})]$
\STATE \qquad\quad a message $\bm{m}\in\{0,1\}^\ell$, window size $h$
\STATE \qquad\quad a base scaling factor $\tilde{\delta}$
\STATE \qquad\quad bit-flip unbiased reweighting function $R$ 
\STATE \qquad\quad pseudorandom functions $\mathtt{prf}_\mathrm{p}$, $\mathtt{prf}_\mathrm{b}$
\vspace*{0.6em}
\FOR{$t = 1,2, \cdots$}
\vspace*{0.6em}
\STATE 1.\quad Apply LM to all prior tokens to get a probability  
\STATE \qquad distribution $\mathrm{P_M}^0$ over the vocabulary.
\vspace*{0.6em}
\STATE 2.\quad \textbf{If} the current previous tokens $\bm{x}_{-h:}$ in the sliding  
\STATE\qquad window have been used as a seed,
\vspace*{0.2em}
    \STATE \qquad \textbf{then} sample a next token $x_t$ from $\mathrm{P_M}^0$;
    \vspace*{0.2em}
    \STATE \qquad \textbf{else} record the current previous tokens $\bm{x}_{-h:}$ and 
    \STATE \qquad  apply $\mathtt{prf}_\mathrm{p}$ and $\mathtt{prf}_\mathrm{b}$ to it to get seed$_\mathrm{p}$ and seed$_\mathrm{b}$.
    \vspace*{0.6em}
\STATE 3.\quad Sample an index $p_t \in \{1,2,\cdots,\ell\}$ using seed$_\mathrm{p}$. 
    \STATE \qquad Sample $b_t^1, b_t^2,\cdots, b_t^{d}\sim\text{Bern}(0.5)$ using seed$_\mathrm{b}$.
    
  \vspace*{0.6em}
  \FOR{$i = 1,2, \cdots,d$}
  \vspace*{0.2em}
\STATE \qquad 4.\quad Let $\theta_{i}=(\Vz^i, \Vo^i, \tilde\delta)$, and calculate 
\begin{equation*}
{e}_t^i=\bm{m}[p_t]\oplus b_t^i.    
\end{equation*}
\STATE \qquad 5.\quad Obtain the $i$-layer reweighted distribution  
\begin{equation*}
   \PM^{i}=R_{\theta_{i},{e}_t^{i}}(\PM^{i-1}).
\end{equation*} 
\ENDFOR
\vspace*{0.5em}
\STATE 6.\quad Sample the next token $x_t$ from $\PMw=\PM^d$.
\vspace*{0.6em}
\ENDFOR
\end{algorithmic}
\end{algorithm}

\begin{algorithm}[H]
\caption{Extracting multi-bit message from text}
\label{alg:watermark_extracting}
\begin{algorithmic}

\STATE {\bfseries Input:} a sequence of balanced vocabulary bipartitions
\STATE \qquad\quad $[(\mathcal{V}_0^1, \mathcal{V}_1^1), (\mathcal{V}_0^2, \mathcal{V}_1^2),...,(\mathcal{V}_0^{d}, \mathcal{V}_1^{d})]$
\STATE \qquad\quad message length $\ell$, window size $h$
\STATE \qquad\quad pseudorandom functions $\mathtt{prf}_\mathrm{p}$, $\mathtt{prf}_\mathrm{b}$
\vspace*{0.6em}
\STATE 1. Initialize a $\ell\times 2$ voting matrix $\boldsymbol{M}$
\vspace*{0.6em}
\FOR{$t = 1,2, \cdots$}
\vspace*{0.6em}
\STATE 2.\quad \textbf{If} the current previous tokens $\bm{x}_{-h:}$ in a sliding 
\STATE\qquad window $h$ have been used as a seed, 
\vspace*{0.2em}
\STATE \qquad \textbf{then} skip the current token and continue;
\vspace*{0.2em}
\STATE \qquad \textbf{else} record current previous tokens $\bm{x}_{-h:}$ and
    \STATE \qquad  apply it to $\mathtt{prf}_\mathrm{p}$ and $\mathtt{prf}_\mathrm{b}$ to get seed$_\mathrm{p}$ and seed$_\mathrm{b}$.
\vspace*{0.6em}
\STATE 3.\quad Sample an index $p \in \{1,2,\cdots,\ell\}$ using seed$_\mathrm{p}$.
    \STATE \qquad Sample $b^1,b^2,\cdots, b^d\sim\text{Bern}(0.5)$ using seed$_\mathrm{b}$.
    
\vspace*{0.6em}
\FOR{$i = 1,2, \cdots d$}
    \STATE \qquad 4. \quad $\hat{e}_t^i=1$ \textbf{ if } 
    \vspace*{0.2em}
    $x_t\in\Vz^i$ \textbf{ else } 
     $\hat{e}_t^i=0$.
     \STATE \qquad 5.\quad 
     \vspace*{0.2em}
     $loc_t^i=b_t^i\oplus \hat{{e}}_t$.
     \STATE \qquad 6. \quad update $\boldsymbol{M}[p_t][loc^i]$ += 1.
\ENDFOR
\vspace*{0.6em}
\ENDFOR

\vspace*{0.6em}
\FOR{$p$ from $1$ to $\ell$}
\STATE \qquad 7. \quad  $\bm{m}[p]=\max(\boldsymbol{M}[p][0], [p][1])$
\ENDFOR
\vspace*{0.6em}
\STATE 8. Return the extracted message $\bm{m}$
\end{algorithmic}
\end{algorithm}

\onecolumn

\section{Examples of Watermarked Text}
\begin{table}[h]
    \centering
    \resizebox{0.89\columnwidth}{!}{
    \begin{tabular}{c|l|c}
             & \multicolumn{1}{c|}{Text}                                                         & Detection                 \\ \hline
Prompt       & {\color[HTML]{CE9178} \begin{tabular}[c]{@{}l@{}}A food processor opens up a   world of possibilities, such as hot soups or icy cold \\ concoctions. It may be  a bit bulky, but processors can be used to make raw desserts, \\ banana ice cream and pie crusts. For more savory options, processors are great for \\ quickly ricing and shredding vegetables. Get your own for less than \$30 on Amazon. \\ 2.   Wooden spoons are going to be your new best friends. Metal  tools scratch many  \\ pots and pans, but wood won’t cause them any harm.\end{tabular}}                & N/A            \\ \hline
No Watermark & \begin{tabular}[c]{@{}l@{}}Try making banana ice cream with only an   immersion blender instead of a traditional \\ ice cream maker and a food   processor. It's a fast and no-mess way to make a tasty\\ treat. Try this   Cinnamon \& Honey Banana Nice Cream Recipe: 3 ripe frozen bananas \\ 1 tbsp.   cinnamon 1 cup almond milk Blend in a large food processor or blender with\\ an   immersion mixer until smooth. Top with honey,  sliced almonds, or your  favorite \\ toppings and enjoy! 3. A good blender will work wonders for   smoothies and soup \\ purees and won't cause you to break your wallet. If you  are an ingredient combo junkie, \\ a blender may be a more economical option   than a processor. You can blend together a \\ variety of foods in your blender  to make raw vegan desserts like cashew or macadamia \\ nut-based creams. 4. If   you think you can forego ice cream, think again! You can make \\ raw ice cream alternatives that are healthy for your waistline but still taste great. Try this \\ recipe fora Raw Coconut Ice Cream: 4 tbsp almond milk ¼ tbsp vanilla   extract ½ cup \\ unsweetened shredded coconut Blend in your food processor and pour over chopped\\ bananas on a plate. Freeze for 15-20 minutes. Enjoy! 5.   Spaghetti squash is a must have\\ for any raw vegan. Not only is spaghetti squash   delicious, but they can be very filling \\ and require no pasta machine.   Spaghetti squash are also very versatile and can be eaten \\ in a variety of   ways. 6. A dehydrator is another great raw food tool. Try making raw \\ vegan   jerky, dehydrating fruit and veggies to make crispy chips or making desserts like \\ cookies or brownies. 7. You can make your own veggie chips. 8. You can even make \\ your own veggie chips. They aren't bad, so stop buying them! Use your dehydrator and\\ some non-stick parchment sheets to make your own at home in three simple steps. Thinly \\ slice vegetables (thinness depends on   crispiness), dehydrate for 12 hours and enjoy! 9. \\ You can add a little spice to any recipe with a small jar containing chili flakes. They\\ come in handy   for many dishes. From savory salads made with freshingredients to \\ simple avocado toast, they make any dish a little bit more interesting. 10. Another great \\ way to add some flavor to your raw cuisine is through the use of spices. Spices can be a \\ fun and easy way to boost flavor. A good salt-free  blend\end{tabular}               & \begin{tabular}[c]{@{}c@{}}z-score: 0.9758\\ \\ p-value: 0.1645\\ \\ N/A\end{tabular}         \\ \hline
BiMark       & \begin{tabular}[c]{@{}l@{}}They're also much easier to   clean: Just wash it with soap and water, and leave it out in\\ the open air to dry. You should also invest in a silicone spoon. They're great for scraping\\ bowls, stirring sauces or blending sauces into a puree. 3. Use a mesh sieve to strain your\\ sauce or pasta water. You don't need to buy one — simply purchase a vegetable steamer \\ or cheesecloth from your local supermarket. Lay the steamer or cloth in a heat-safe \\ strainer, and pour the sauce through it.   Alternatively, try tying up the steamer or cloth \\ around a wooden spoon and straining that way. 4. If you're working with a tiny kitchen \\ and limited counter space, consider investing in a collapsible silicone spatula. Like all \\ high-quality tools, it's heat resistant and dishwasher safe. Silicone spatulas are great for \\ stirring and mixing, because the soft rubber won't   scratch any cookware. They also \\ have flat edges, which make them perfect for  scraping those last bits of batter out of \\ bowls. 5. Don't buy tongs or a fork   for flipping food. A spatula makes the whole \\ business much easier. Spatulas are incredibly adaptable — they work on oiled and \\ greased surfaces, and   they're ideal for tossing or turning the salad. The handle is flat, \\ which means the spatula is safe to put on a chopping board. Unlike a fork or slotted\\ spoon, the spatula is gentle on fragile fish and doesn't disturb the delicate bits of salad \\ leaves. 6. For less than \$30 you can buy a bamboo  cutting board. Be sure to select a \\ board made of untreated bamboo, and purchase   one with a groove running down the \\ center for easy food disposal. Since the bamboo's naturally water resistant, there's no \\ need to soak the board in water or bleach; you can just wash it with hot water and soap. \\ Be careful,  though: Wooden cutting boards can crack if they get too hot, so keep it off \\ the stove. 7. Don't throw out your coffee grinder. It's an ideal gadget for  transforming \\ spices into powdery blends. Keep an eye on the grind: If it gets too fine, the grinder may \\ release static electricity, which will make it hard for spices to exit the chamber. If they\\ keep getting stuck, mix the spice   around with a spoon. For even more flavor, try \\ placing the grinder on an  electric heater. If you're using whole spices, let them heat up\\ directly. This will help open up\end{tabular} & \begin{tabular}[c]{@{}c@{}}z-score: 17.5733\\ \\ p-value: 1.97E-69\\ \\ extraction message: \\ ``11110101111000101001100011000111'' \\ \\ \\ voting matrix $\boldsymbol{M}$=\\ {[}{[}24, 96{]}, {[}65, 79{]}, {[}44, 84{]},{[}30, 58{]}, \\ {[}73, 47{]},{[}32, 48{]},{[}64, 40{]}, {[}79, 113{]}, \\ {[}33, 78{]}, {[}24, 56{]}, {[}30, 74{]}, {[}57, 39{]}, \\ {[}98, 46{]}, {[}65, 39{]}, {[}49, 79{]}, {[}54, 26{]}, \\ {[}48, 87{]}, {[}49, 31{]}, {[}44. 12{]}, {[}67, 109{]}, \\ {[}30, 42{]}, {[}13, 3{]}, {[}63, 17{]}, {[}38, 34{]}, \\ {[}36, 68{]}, {[}126, 162{]}, {[}90, 38{]}, {[}93, 51{]}, \\ {[}81, 39{]}, {[}46, 90{]}, {[}15, 49{]}, {[}32, 71{]}{]}\\ \\ message extraction rate: 1.0\end{tabular}
\end{tabular}
}
\end{table}

\begin{table}[h]
\centering
\resizebox{0.89\columnwidth}{!}{
\begin{tabular}{c|l|c}
             & \multicolumn{1}{c|}{Text}  & Detection     \\ \hline
Prompt       & {\color[HTML]{CE9178} \begin{tabular}[c]{@{}l@{}}Before SAP Labs designed its new digs in Palo Alto, company executives asked employees how \\ to make it a welcoming place to work. After all, that’s where they spend most of their day. The \\ answer? Windows. Lots of windows letting in lots of light. The German-based business software \\ company complied – even going so far as to move executives away from windows to give more \\ workers access to daylight. As the country embraces “green” building design – be it in the \\ workplace or at home – daylight has become a premium.\end{tabular}}   & N/A     \\ \hline
No Watermark & \begin{tabular}[c]{@{}l@{}}``Light has so many things to contribute to our satisfaction and productivity levels,'' said Mark \\ de la Vergne, president of the International Commission on Illumination. ``We're just coming to \\ realize that.'' Daylight not only promotes healthier lives and work environments, but can be an \\ economic incentive for businesses and homeowners and a tool for the real estate market. It can \\ boost productivity, stimulate better decision-making and reduce energy and operating costs – all \\ reasons green builders are encouraging more natural light than in past years. ``Daylight is a \\ commodity that we've neglected for so long as Americans,'' said David Johnston, president of \\ Pacific Green Building and an expert on designing green buildings. ``I think we're starting to \\ see a real awakening.'' Green design is a trend that has swept over the Western U.S. in the last \\ decade. As more people seek efficient, earth-friendly homes, they demand natural light, especially \\ for indoor activities. ``The value of natural light is huge,'' said Tim Eddy, senior principal for \\ architecture and design company ZGF. ``With the amount of time we spend indoors, it's the way \\ for all of us to remain healthfully connected to a natural cycle.'' People naturally respond to direct \\ sunlight. It's been part of our environment forever. It affects us from the inside out – the way we \\ feel, how well we sleep, how much energy we have, and how we think. ``There is an energy in the \\ air, and you can just get a spark from it,'' Eddy said. Natural light is a human response to nature, \\ de la Vergne said. It's also an economic response to a growing building trend. In the last decade, \\ green building has taken its place alongside the more practical – and often more costly – \\ considerations of building design and construction. It's considered a ``must'' item for homeowners, \\ office buildings and residential builders throughout the country. The U.S. Green Building Council, \\ which established a rating standard for green building, now provides certification for tens of \\ thousands of buildings – residential and commercial – across the country. There aren't statistics on \\ how many homes have ``natural light'' on their wish list. But ``green'' homes are increasingly \\ popular. The Leadership in Energy and Environmental Design certification program – popular for \\ homes, particularly in the West, and a requirement for green buildings owned by the federal \\ government – certifies residential buildings based on criteria such as energy efficiency, water \\ efficiency, location, material selection and sustainable design. ``Any homebuyer who plans to stay \\ in a home longer than five years is better\end{tabular} & \begin{tabular}[c]{@{}c@{}}z-score:0.9899\\ \\ p-value:0.1611\\ \\ N/A\end{tabular}        \\ \hline
BiMark       & \begin{tabular}[c]{@{}l@{}}When the federal government set new energy and environmental requirements this summer, it \\ raised standards for indoor lighting, calling for buildings to let in more natural light through \\ windows. That goal dovetails nicely with energy efficiency goals set in recent years. ``Daylight \\ is free, and at a time when energy is super expensive, you can save money and reduce carbon \\ emissions,'' said Michael Howard, the new director of lighting research at the Pacific Northwest \\ National Laboratory in Richland. Howard says the new research, including the first-ever test bed \\ to be constructed to measure natural light, could lead to big energy savings in the future. \\ Lighting represents the largest share of electricity use in commercial buildings nationwide –\\ more than the air conditioners and furnaces in buildings combined. And lighting accounts for 27 \\ percent of the power consumed by homes. The new energy and environmental requirements \\ being considered by Congress would lead to big improvements in energy efficiency nationwide. \\ At the same time, a coalition ofexperts at the Pacific Northwest National Laboratory is looking \\ at the impact of natural light on worker productivity and job satisfaction. This all has become \\ important in the new age of energy efficiency. ``Green building is the buzz now,'' Howard said. \\ ``Green building is the buzz now.'' How much sunlight do we need? One of the questions the \\ new research, including the test bed to be constructed next year, is what is the right amount of \\ light to set the stage for a productive work environment. Howard says most federal buildings aim \\ for lighting that would equal the equivalent of a full moon. But lighting that would equal the \\ equivalent of a full moon is actually too much light. In fact, he says it could cause problems with \\ worker productivity. ``Some studies are showing that you need less light, about two to three \\ times less, to be more productive,'' he said. The new test bed to be constructed next year is a \\ state-of-the-art 25,000-square-foot office building at Richland's Hanford site. It will study \\ everything from window placements to the impact of natural light on worker productivity. The \\ new test bed will look for ways to better design natural light to maximize worker productivity. \\ ``We think that it'll be very helpful in understanding if we're really getting the benefits,'' he said. \\ The new test bed to be constructed next year is a state-of-the-art 25,000-square-foot office building \\ at Richland's Hanford site. It will study everything from window placements to the impact of \\ natural light on worker productivity. The new test bed will look for ways to better design natural \\ light to maximize worker\end{tabular}                                             & \begin{tabular}[c]{@{}c@{}}z-score:20.47\\ \\ p-value: 1.89e-93\\ \\ extracted message: \\ ``10100000110011010001100000010100''\\ \\ voting matrix $\boldsymbol{M}$=\\ {[}{[}26, 78{]}, {[}23, 17{]}, {[}23, 57{]}, {[}105, 55{]}, \\ {[}63, 41{]}, {[}108, 68{]}, {[}115, 53{]}, {[}134, 74{]}, \\ {[}52, 100{]}, {[}18, 54{]}, {[}117, 43{]}, {[}53, 19{]}, \\ {[}35, 93{]}, {[}51, 85{]}, {[}85, 43{]}, {[}57, 87{]}, \\ {[}84, 52{]}, {[}67, 21{]}, {[}58, 46{]}, {[}24, 56{]}, \\ {[}53, 91{]}, {[}67, 53{]}, {[}67, 29{]}, {[}116, 60{]}, \\ {[}62, 34{]}, {[}77.6, 66.4{]}, {[}57, 15{]}, {[}25, 47{]}, \\ {[}65, 31{]}, {[}62, 146{]}, {[}137, 63{]}, {[}92, 44{]}{]}\\ \\ message extraction rate: 1.0\end{tabular}
\end{tabular}
}
\end{table}

\end{document}